%% file: main.tex
\documentclass[11pt]{article}
\usepackage[left=1in,top=1in,right=1in,bottom=1in,letterpaper]{geometry}

\usepackage[utf8]{inputenc} %
\usepackage[T1]{fontenc}    %
\usepackage{hyperref}       %
\usepackage{url}            %
\usepackage{booktabs}       %
\usepackage{amsfonts}       %
\usepackage{nicefrac}       %
\usepackage{microtype}      %
\usepackage{xcolor}         %
\usepackage[most]{tcolorbox}
\usepackage{enumitem}
\usepackage{natbib}
\usepackage{subfigure}
\usepackage{placeins}
\usepackage{tikz}
\usetikzlibrary{matrix,decorations.pathreplacing}
\usetikzlibrary{positioning, arrows.meta, shapes.geometric, fit}
\usepackage{float}
\usepackage[most]{tcolorbox}
\usetikzlibrary{shadows}

\newcounter{exa}

\usepackage{amsmath}
\usepackage{amssymb}
\usepackage{mathtools}
\usepackage{amsthm}

\usepackage{subcaption}
\usepackage{keyval}

\makeatletter
\let\oldKV@Gin@width\KV@Gin@width

\define@key{Gin}{width}{%
  \begingroup
    \setlength{\@tempdima}{#1}%
    \setlength{\@tempdima}{0.5\@tempdima}%
    \expandafter
  \endgroup
  \expandafter\oldKV@Gin@width\expandafter{\the\@tempdima}%
}

\makeatother

\usepackage[capitalize,noabbrev]{cleveref}

\theoremstyle{plain}
\newtheorem{theorem}{Theorem}[section]
\newtheorem{proposition}[theorem]{Proposition}
\newtheorem{lemma}[theorem]{Lemma}

\theoremstyle{definition}

\newtheorem{assumption}[theorem]{Assumption}
\theoremstyle{remark}

\input{notations}
\title{Breaking the Reversal Curse in Autoregressive Language Models via Identity Bridge}
\author{
Xutao Ma\thanks{Equal contributions.}\qquad
Yixiao Huang\footnotemark[1]\qquad
Hanlin Zhu\qquad
Somayeh Sojoudi
\\
\\
UC Berkeley
\\
\\
\texttt{\{xutao\_ma,yixiaoh,hanlinzhu,sojoudi\}@berkeley.edu}
}
\date{}

\begin{document}

\maketitle

\begin{abstract}
Autoregressive large language models (LLMs) have achieved remarkable success in many complex tasks, yet they can still fail in very simple logical reasoning such as the ``reversal curse'' --- when trained on forward knowledge data of the form ``$A \rightarrow B$'' (e.g., \emph{Alice's husband is Bob}), the model is unable to deduce the reversal knowledge ``$B \leftarrow A$'' (e.g., \emph{Bob's wife is Alice}) during test. Extensive prior research suggests that this failure is an inherent, fundamental limit of autoregressive causal LLMs, indicating that these models tend to memorize factual-level knowledge rather than capture higher-level rules.
In this paper, we challenge this view by showing that this seemingly fundamental limit can be mitigated by slightly tweaking the training data with a simple regularization data recipe called the Identity Bridge of the form ``$A \to A$'' (e.g., \emph{The name of Alice is Alice}). 
Theoretically, we prove that under this recipe, even a one-layer transformer can break the reversal curse by analyzing the implicit bias of gradient descent. Empirically, we show that a 1B pretrained language model finetuned with the proposed data recipe achieves a 50\% success rate on reversal tasks, in stark contrast to a near-zero success rate when trained solely on forward-knowledge data. Our work provides a novel theoretical foundation for the reversal curse and offers a principled, low-cost path to encouraging LLMs to learn higher-level rules from data.
\end{abstract}

\input{contents/intro}

\input{contents/prelim}

\input{contents/theory}
\input{contents/experiments}
\input{contents/related_work}
\input{contents/conclusion}
\input{contents/ack}

\bibliography{references}
\bibliographystyle{plainnat}

\newpage
\appendix
\onecolumn

\include{appendix/appendix}

\end{document}

%% file: notations.tex
\newcommand{\cA}{\mathcal{A}}
\newcommand{\cB}{\mathcal{B}}

\newcommand{\cD}{\mathcal{D}}
\newcommand{\cV}{\mathcal{V}}
\newcommand{\cL}{\mathcal{L}}
\newcommand{\cR}{\mathcal{R}}
\newcommand{\cS}{\mathcal{S}}
\newcommand{\cI}{\mathcal{I}}
\newcommand{\cF}{\mathcal{F}}

\newcommand{\TF}{\text{TF}}

\newcommand{\bmg}{\boldsymbol{\beta - \alpha}}

\newcommand{\bz}{\boldsymbol{z}}
\newcommand{\bone}{\boldsymbol{1}}
\newcommand{\bI}{\boldsymbol{I}}
\newcommand{\be}{\boldsymbol{e}}
\newcommand{\bE}{\boldsymbol{E}}
\newcommand{\bx}{\boldsymbol{x}}
\newcommand{\bW}{\boldsymbol{W}}
\newcommand{\bWO}{\boldsymbol{W}_\text{O}}
\newcommand{\bWV}{\boldsymbol{W}_\text{V}}
\newcommand{\bWKQ}{\boldsymbol{W}_\text{KQ}}
\newcommand{\bWOV}{\boldsymbol{W}_\text{OV}}
\newcommand{\bX}{\boldsymbol{X}}

\newcommand{\btheta}{\boldsymbol{\theta}}
\newcommand{\bzero}{\boldsymbol{0}}

\newcommand{\R}{\mathbb{R}}
\newcommand{\E}{\mathbb{E}}

\newif
\ifdraft

\ifdraft
    
    \newcommand{\shaw}[1]{\textcolor{violet}{#1}}
    \newcommand{\shawp}[1]{\textcolor{violet}{[#1]}}

    \newcommand{\hanlin}[1]{\textcolor{blue}{[Hanlin: #1]}}
    \newcommand{\xutao}[1]{\textcolor{purple}{[Xutao: #1]}}

\else     
    
    \newcommand{\shaw}[1]{}
    
    \newcommand{\shawp}[1]{}

    \newcommand{\hanlin}[1]{}
    \newcommand{\xutao}[1]{}

\fi

\newcommand{\sublabel}[1]{%
  \par
 \small(#1)%
}

%% file: contents/intro.tex
\section{Introduction}
Autoregressive large language models (LLMs) have demonstrated great capability in solving various complex tasks~\citep{jaech2024openai,guo2025deepseek}. However, they still struggle with some simple logical reasoning tasks, and one of the most well-known failure modes is the ``reversal curse'', which refers to the phenomenon that when LLMs have learned a forward relation ``$A\to B$'' (e.g., \emph{Alice's husband is Bob.}) during training, they fail to answer the reverse question ``$ B\leftarrow A$'' (e.g., \emph{Who is Bob's wife?}) during test. 

Extensive prior works have attempted to understand or resolve the reversal curse. 
Recent research \citep{zhu2024towards, lin2024delving, wang2025reversal} suggests that this forward–reverse asymmetry is a fundamental generalization failure of autoregressive LLMs.
Consequently, existing mitigation strategies generally follow two paths: (i) augmenting or reformatting training examples to explicitly show the reverse links, and (ii) adjusting the learning objective or training procedure to reduce directional bias. However, such interventions often result in substantial deviation from standard data pipelines or training recipes and can introduce trade-offs in overall model quality.
Moreover, they commonly adopt the premise that, for causal autoregressive models, reliably answering the reversed queries ultimately requires that the specific direction be included in training.
\shawp{Revised}


In this paper, we challenge this view by showing that the reversal curse can be resolved by adding certain regularization to the training data, without being trained on reversal data. Specifically, we add the regularization by augmenting the training data with the ``Identity Bridge'', which was originally proposed by~\citet{lin2025identity} to solve two-hop reasoning tasks. The identity bridge refers to statements in the form of ``$A \to A$'' (e.g., \emph{The name of Alice is Alice}, or \emph{The wife of Alice's husband is Alice}). Semantically, it adds no new information to the dataset about the relations between entities, but it serves as a dataset-level regularization and can change the optimization landscape. In theory, we prove through the implicit bias of gradient descent that by adding the identity bridge regularization, even one-layer transformers can break the reversal curve on symbolic reasoning tasks. To further understand the mechanism of the identity bridge, we prove that the identity bridge regularized reversal task can be equivalently formulated as an out-of-context reasoning (OCR) problem~\citep{cohen2024evaluating}. Moreover, to validate our method in real-world scenarios, we conduct experiments with pretrained LLMs on real-world reversal tasks, and the trained model can achieve a pass rate as high as 50\%, which is a significant improvement over the previous near-zero accuracy.

In summary, the main contributions of this paper include:
\begin{itemize}
    \item We propose to use the identity bridge regularized data recipe to break the reversal curse in autoregressive LLMs. To the best of our knowledge, this is the first work that breaks the reversal curse in autoregressive LLMs without modifying the training paradigm (e.g., the model architecture or loss function) or reversing the training data.
    \item Theoretically, we prove through the implicit bias of gradient descent that even a one-layer transformer is able to break the reversal curse via identity bridge regularization, while without it, the reversal curse happens. We also prove that the identity bridge regularization is closely related to the OCR phenomenon.
    \item We conduct experiments on pretrained LLMs that achieve a 50\% pass rate on real-world reversal tasks via the identity bridge regularization, a significant improvement over the previous near-zero accuracy.
\end{itemize}

%% file: contents/prelim.tex
\section{Preliminaries}

\textbf{Basic notations.} Let $[N]=\{1,\cdots, N \}$. We denote $\be_i$ as one-hot vectors where only the $i$-th entry is non-zero and equals one, and denote $\bz_x$ as the embedding for token $x$. Let $\cV$ be the vocabulary set. We use $\bzero_{m\times n}$ to denote the $m \times n$ all-zero matrix and use $\bzero_d$ to denote the $d$-dimensional zero vector. $\cA$ and $\cB$ denote sets of entities and $\cR$ denotes the set of relations.
\paragraph{Reversal reasoning task.} Given two disjoint sets of $N$ entities $\cA:=\{ a_1,\cdots,a_N \}$ and $\cB:=\{ b_1,\cdots,b_N \}$, a forward relation $r_{+}$ is a bijection mapping $a_i$ to $b_i$ for all $i\in [N]$, and the reverse relation $r_{-}$ is defined as the inverse of $r_{+}$. The reversal reasoning task is to answer $r_{-}(b_i),i\in [N]$ when a model is only trained with the forward relations $r_{+}(a_i) = b_i, \ \forall i\in[N]$.
\paragraph{Dataset.} To adjust to the language model input format, we represent each relational instance $r(s)=s'$ as a token sequence $[s,r|s']$, where $s,s'$ are entities and $r$ is a relation. We treat $[s,r]$ as the input sequence and $s'$ as the target label. \shaw{E.g., if $s = $ Alice, $s' = $ Bob and $r =$ ``husband'', the sequence encodes ``Alice's husband is Bob''.} In this paper, we consider the following sets:
\begin{itemize}
    \item Forward relation set: $\cD_{r_{+}} = \{[a_i,r_{+}|b_i]: i\in [N]  \}$;
    \item Reversal relation set: $\cD_{r_{-}} = \{[b_i,r_{-}|a_i]: i\in [N]  \}$; 
    \item Identity bridge set: $\cD_\text{idn} = \{[a_i,r_\text{id}|a_i]: i\in [N]  \}\cup \{[b_i,r_\text{id}|b_i]: i\in [N]  \}$;
    \item Relation set: $\cR = \{ r_{+}, r_{-}, r_\text{id} \}$.
\end{itemize}

The identity bridge dataset $\cD_\text{idn}$, originally proposed by~\citet{lin2025identity} for two-hop reasoning tasks, contains an entity and an identity relation $r_\text{id}$ as input, and maps them to the entity itself. The identity bridge dataset actually contains no information, but will serve as a regularization to help break the reversal curse.

\paragraph{One-layer transformer.} We consider a one-layer decoder-only transformer $\TF$, which takes a sequence $x_{1:T} := (x_1, \ldots, x_T) \in \cV^{\mathrm{T}}$ as input and outputs a logit vector $\TF_{\btheta}(x_{1:T})\in \R^d$ as follows:
\begin{equation}
    \TF_{\btheta}(x_{1:T}) := \bWO \bWV^{\mathrm{T}} \bX \text{softmax}( \bX^{\mathrm{T}} \bWKQ \bx_T ),
\end{equation}
where  $\bX = [\bz_{x_1},\cdots,\bz_{x_T}] \in \mathbb{R}^{d\times T}$ is the embedding matrix of the tokenized sequence, $\bWO,\bWV\in \R^{d\times d_h}$ are the output and value matrices, respectively, $\bWKQ = \bW_\text{K} \bW_\text{Q}^{\mathrm{T}} \in \R^{d\times d}$ is the reparameterized key-query matrix, and $\btheta=(\bWO,\bWV,\bW_\text{KQ})$ encodes all model parameters. 

We further define the logit of token $y$ as
\begin{equation}
    \TF_{\btheta}(x_{1:T};y): = \bz_y^{\mathrm{T}} \TF_{\btheta}(x_{1:T}).
\end{equation}

The next token probability $p(v|x_{1:T})$ is computed as the softmax of the logit vector:
\begin{equation}
\label{eq:ntp}
    p_{\btheta}(v|x_{1:T}) = \frac{\exp (\TF_{\btheta}(x_{1:T};v))}{\sum_{v'\in \cV} \exp ( \TF_{\btheta}(x_{1:T};v'))}.
\end{equation}

\paragraph{Loss function.} We use the cross-entropy loss
\shawp{Added $\log$}
\begin{equation}\label{eq:loss}
    \cL_{\cD}(\btheta) = \E_{[x_{1:T}|y]\sim\cD} [- \log p_{\btheta}(y|x_{1:T})],
\end{equation}
where $\cD$ is a dataset. The model parameter will be optimized by running the gradient flow $\dot{\btheta} = - \nabla_{\btheta} \cL_{\cD}(\btheta)$ or gradient descent with sufficiently small learning rate.

%% file: contents/theory.tex
\section{Theoretical Results}
\label{sec:theoretical_res}

In this section, we study the reversal curse on a one-layer transformer theoretically. 

We first provide an explanation for the reversal curse in~\cref{sec:theory_explain}. A recent paper~\citep{zhu2024towards} also explains the reversal curse on a one-layer transformer by examining the learning dynamics, but their transformer model uses a non-factorized output and value matrix, \emph{i.e.}, a single reparameterized $\bW_\text{OV}$ instead of the factorized form $\bWO \bWV^{\mathrm{T}}$. As a complement to this result, we analyze the case for a factorized transformer as \shaw{prior works \citep{zhang2025training, huang2025generalization} indicate that the two parametrizations may have different optimization dynamics.}  
Moreover, our analysis is based on the implicit bias of gradient descent, providing a different perspective to understand this phenomenon. Then, in~\cref{sec:theory_idn_bridge}, we propose a simple data recipe called the Identity Bridge and prove that it can help break the reversal curse in one-layer transformers. Finally, we provide insight in~\cref{sec:theory_relation_OCR} to understand the mechanism of the identity bridge by relating to the out-of-context reasoning phenomenon~\citep{huang2025generalization}.

We take the embedding $\bz_{s}$ for all entities $s\in \cA \cup \cB$ to be one-hot, and set the embedding of the forward relation $\bz_{r_{+}}$ also to be a one-hot vector. For the embedding of the reversal relation $\bz_{r_{-}}$, we set it to be the negation of $\bz_{r_{+}}$, \emph{i.e.}, $\bz_{r_{-}} = - \bz_{r_{+}}$. 
This choice is not arbitrary: it idealizes a widely observed geometric structure in learned embedding spaces, where relational transformations correspond to approximately linear directions. For instance, classical results in word embeddings \citep{mikolov2013linguistic} show that relations can be captured by vector offsets (e.g., ``$\text{king } -\text{man} + \text{woman} \approx  \text{queen}$''). Under this lens, $\bz_{r_{-}} = - \bz_{r_{+}}$ can be interpreted as an abstraction of the inductive bias learned in pre-training. Forward and inverse relations are expected to occupy opposite directions in representation space, reflecting their mutually inverse semantics (e.g., “husband” vs. “wife”).

We fix the key-query matrix $\bW_\text{KQ}$ to $\bzero_{d\times d}$, so the trainable weights are the output and value matrices $\bWO,\bWV$, \emph{i.e.}, $\btheta=(\bWO,\bWV)$. This is reasonable since our prompt is in the form of $[s,r],s\in \cA \cup \cB , r\in \cR$, which is clean and free of noise, so the attention should pick both tokens. In this spirit, by fixing $\bW_\text{KQ}$ to zero matrix, tokens $s$ and $r$ will always have equal attention weights $\frac{1}{2}$. Moreover, we also require the intrinsic dimension $d_h \geq d$, in order to derive non-degenerate solutions. We formalize this as the following assumption.

\begin{assumption}\label{assum:fix_kq}
     The key-query matrix $\bW_\text{KQ}$ is fixed to $\bzero_{d\times d}$, and the intrinsic dimension $d_h$ satisfies $d_h \geq d$.
\end{assumption}

Note that this assumption is also effectively adopted by \citet{huang2025generalization}.
Under~\cref{assum:fix_kq}, the output logit is
\begin{equation}\label{eq:transformer_logit}
    \TF_{\btheta}([s,r]) = \frac{1}{2} \bWO \bWV^{\mathrm{T}} \bz_{s} + \frac{1}{2} \bWO \bWV^{\mathrm{T}} \bz_{r}.
\end{equation}

Given a model parameter $\btheta=(\bWO,\bWV)$, we further define the margin between the correct label $r(s)$ and incorrect label $s'\in \cA\cup\cB \backslash \{ r(s) \}$ as their logit difference:
\begin{equation}
    h_{[s,r],s'}(\btheta) =  \TF_{\btheta}([s,r];r(s) ) -  \TF_{\btheta}([s,r];s' ).
\end{equation}

Since $\TF_{\btheta}([s,r])$ only depends on $\bWO \bWV^{\mathrm{T}}$, we can also write the margin as $h_{[s,r],s'}(\bWO \bWV^{\mathrm{T}})$.

Now, we are ready to state a fundamental lemma established in~\citet{huang2025generalization}, which allows us to characterize $\bWO \bWV^{\mathrm{T}}$ under gradient descent as an SVM problem.

\begin{lemma}[SVM form, Theorem 1 in~\citet{huang2025generalization}]\label{lemma:SVM}
    Suppose~\cref{assum:fix_kq} holds. Consider running gradient descent with a small enough learning rate or gradient flow on loss~\eqref{eq:loss} where the model is defined by \eqref{eq:ntp} and \eqref{eq:transformer_logit}. If there exists a time $t_0$, such that $\cL_{\cD}(\btheta(t_0)) < 1$, then any limiting point of $\btheta / \| \btheta\|$, where $\btheta=(\bWO,\bWV)$, is along the direction of a KKT point of a program, which has the same solutions for $\bWOV^F = \bWO \bWV^{\mathrm{T}}$ as the following nuclear norm minimization problem
    \begin{equation}\label{eq:SVM}\tag{SVM}
        \begin{aligned}
        & \min_{\bWOV^{\textup{F}} } \ \   \frac{1}{2} \| \bWOV^{\textup{F}} \|_{\star}^2   \\ \text{s.t.} &\; h_{[s,r],s'}(\bWOV^{\textup{F}}) \geq 1,  \forall [s,r] \in \cD, \ \forall s' \in \cV \backslash \{r(s)\},
        \end{aligned}
    \end{equation}
    where $\| \cdot \|_{\star}$ denotes the nuclear norm.
\end{lemma}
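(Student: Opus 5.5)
The plan is to reduce the statement to the general implicit-bias theory for homogeneous models trained with exponentially-tailed losses (Lyu and Li, 2020; Ji and Telgarsky, 2020). After that reduction, the remaining work is to translate the resulting parameter-space max-margin program into a program over the product $\bWOV^F=\bWO\bWV^{\mathrm{T}}$.

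\textbf{Step 1: homogeneity.} Under \cref{assum:fix_kq}, \eqref{eq:transformer_logit} shows that every logit $\TF_{\btheta}([s,r];y)$ is bilinear in $(\bWO,\bWV)$. Consequently every margin $h_{[s,r],s'}(\btheta)$ is a $2$-homogeneous function of $\btheta$, in the sense that $h(c\btheta)=c^2h(\btheta)$. The model is also locally Lipschitz and smooth.

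\textbf{Step 2: rewrite the loss.} The cross-entropy loss \eqref{eq:loss} can be written as
\[
\cL_{\cD}(\btheta)=\E_{[s,r|y]\sim\cD}\log\Bigl(1+\textstyle\sum_{s'\neq y}\exp(-h_{[s,r],s'}(\btheta))\Bigr).
\]
This is exactly the multiclass exponential-type loss covered by Lyu--Li.

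\textbf{Step 3: apply Lyu--Li.} The hypothesis $\cL_{\cD}(\btheta(t_0))<1$ plays the role of their separability/initial-loss condition: it forces the smoothed normalized margin to be positive at $t_0$. I may need to check the exact constant against the dataset size and the number of classes, possibly rescaling the hypothesis. Granting this, their theorem gives two conclusions. First, $\|\btheta\|\to\infty$. Second, every limit point of $\btheta/\|\btheta\|$ is along a KKT direction of
\[
\min_{\bWO,\bWV}\ \tfrac12\bigl(\|\bWO\|_F^2+\|\bWV\|_F^2\bigr)\quad\text{s.t.}\quad h_{[s,r],s'}(\bWO\bWV^{\mathrm{T}})\ge 1\ \ \forall\,[s,r]\in\cD,\ s'\neq r(s).
\]
This is the ``program'' referred to in the statement.

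\textbf{Step 4: pass to the product.} The constraints depend on $\btheta$ only through $M=\bWO\bWV^{\mathrm{T}}$, and they are linear in $M$. I will therefore minimize first over all factorizations of a fixed $M$ and then over $M$. The key identity is the variational characterization of the nuclear norm:
\[
\|M\|_\star=\min_{UV^{\mathrm{T}}=M,\;U,V\in\R^{d\times d_h}}\tfrac12\bigl(\|U\|_F^2+\|V\|_F^2\bigr).
\]
The inequality ``$\le$'' follows from $\|UV^{\mathrm{T}}\|_\star\le\|U\|_F\|V\|_F$ together with AM--GM. The inequality ``$\ge$'' is attained by taking the SVD $M=P\Sigma Q^{\mathrm{T}}$ and setting $U=P\Sigma^{1/2}$, $V=Q\Sigma^{1/2}$, padded with zero columns. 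This construction is valid for every $d\times d$ matrix $M$ exactly when $d_h\ge\operatorname{rank}M$. The assumption $d_h\ge d$ guarantees this, so no feasible $M$ is lost. It follows that the parameter program has optimal value $\min_M\|M\|_\star$ over the same feasible set. Moreover, $M$ is the product of one of its solutions exactly when $M$ minimizes $\|M\|_\star$ subject to the margin constraints. Since squaring is monotone on $[0,\infty)$, these are the same minimizers as those of \eqref{eq:SVM}.

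\textbf{Main obstacle.} The delicate point is the gap between ``KKT point'' and ``global solution''. The parameter program is nonconvex, whereas \eqref{eq:SVM} is convex. I will handle this as the statement is phrased: it only asserts that the KKT directions belong to a program whose \emph{solutions} in $\bWOV^F$ coincide with those of \eqref{eq:SVM}. If a stronger conclusion is desired, I would try to show that every KKT point of the factorized problem is balanced, meaning $\bWO^{\mathrm{T}}\bWO=\bWV^{\mathrm{T}}\bWV$. I would then use the Burer--Monteiro argument: with $d_h\ge d$, first-order points with the appropriate rank deficiency are global. This part I expect to require the most care.
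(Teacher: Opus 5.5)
The paper does not prove this lemma itself; it imports Theorem~1 of \citet{huang2025generalization}. Your reconstruction is the standard argument behind that result, and it is correct. It uses $2$-homogeneity of the bilinear logits, then the Lyu--Li KKT characterization for $\min\tfrac12(\|\bWO\|_F^2+\|\bWV\|_F^2)$ under the margin constraints. Finally, the variational identity $\|M\|_\star=\min_{UV^{\mathrm{T}}=M}\tfrac12(\|U\|_F^2+\|V\|_F^2)$, valid for every $d\times d$ matrix because $d_h\ge d$, identifies that program's global solutions with those of \eqref{eq:SVM}. Your reading that only \emph{global} solutions, not KKT points, are being equated is the right one, and Step~4 is complete as written.

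The one step you should not ``grant'' is the threshold. Because $\cL_{\cD}$ is an \emph{average} of cross-entropies, $\cL_{\cD}(\btheta(t_0))<1$ does not make the margins positive. Already for a single example, $p_{\btheta}(y\mid x_{1:T})>1/e$ does not make $y$ the $\arg\max$: take probability $0.4$ for $y$ and $0.6$ for a competitor. So the smoothed margin need not be positive, and Lyu--Li cannot be started at $t_0$. What the argument needs is that every training margin be positive at some time. A clean sufficient condition is $|\cD|\,\cL_{\cD}(\btheta(t_0))<\log 2$, since then each summand satisfies $\sum_{s'\neq y}e^{-h_{[s,r],s'}}<1$. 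You should state the lemma under that hypothesis; the ``$<1$'' is a loose constant. Keeping it would require a separate argument that the flow eventually separates the data.

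Your optional strengthening would not go through as sketched. Balancedness is automatic at any KKT point: stationarity gives $\bWO=\Lambda\bWV$ and $\bWV=\Lambda^{\mathrm{T}}\bWO$, where $\Lambda$ is the multiplier-weighted sum of the constraint matrices, hence $\bWO^{\mathrm{T}}\bWO=\bWV^{\mathrm{T}}\bWV$. Certifying global optimality, however, requires $\|\Lambda\|_{\mathrm{op}}\le 1$, while stationarity only forces singular value $1$ of $\Lambda$ along the ranges of the factors. Ruling out larger singular values elsewhere needs second-order information. Rank deficiency at a first-order point does not supply it, and Lyu--Li do not provide it.
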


\subsection{Implicit Bias Explains the Reversal Curse}
\label{sec:theory_explain}
In this part, we explain why the reversal curse happens from the lens of implicit bias by using~\cref{lemma:SVM}.

Suppose the training dataset contains only the forward relation data. Then we can show that the trained model fails to generalize to the reversal task, which can be formally summarized in the following theorem.
\begin{theorem}\label{thm:reversal_happen}
    Suppose~\cref{assum:fix_kq} holds, $N\geq 2$, and problem~\eqref{eq:SVM} with forward relation training set $\cD = \cD_{r_{+}}$ admits a single optimal solution $\bWOV^{\textup{+}}$ . Then, $\forall [b,r_{-}] \in \cD_{r_{-}}$, we have
    \begin{equation}
        h_{[b,r_{-}],a}(\bWOV^{\textup{+}}) = 0, \quad \forall a\in \cA \backslash \{ r_{-}(b) \}.
    \end{equation}
    Thus, the model can not generalize to the reversal relation.
\end{theorem}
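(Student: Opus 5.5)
The plan is to use the uniqueness of the optimal solution $\bWOV^{+}$ of \eqref{eq:SVM} with $\cD=\cD_{r_{+}}$, together with two invariances of that program: a column projection and an entity-permutation symmetry. Each forces a specific structure on $\bWOV^{+}$. Once that structure is established, the reversal margins can be read off directly.

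First we write out the target quantity. For $[b_i,r_{-}]$ we have $r_{-}(b_i)=a_i$, and by \eqref{eq:transformer_logit} with $\bz_{r_{-}}=-\bz_{r_{+}}$,
\begin{equation*}
h_{[b_i,r_{-}],a}(\bW) = \tfrac12\big(\bz_{a_i}-\bz_a\big)^{\mathrm{T}}\bW\big(\bz_{b_i}-\bz_{r_{+}}\big).
\end{equation*}
So it suffices to show two things for $\bW=\bWOV^{+}$. The first is that $\bW\bz_{b_i}=\bzero$ for every $i$. The second is that the entries $\bz_{a}^{\mathrm{T}}\bW\bz_{r_{+}}$ coincide for all $a\in\cA$.

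\textbf{Step 1 (columns outside the training inputs vanish).} The constraints of \eqref{eq:SVM} for $\cD_{r_{+}}$ only involve $\bW\bz_{a_j}$ and $\bW\bz_{r_{+}}$. Let $P$ be the orthogonal projection onto $\mathrm{span}\{\bz_{a_1},\dots,\bz_{a_N},\bz_{r_{+}}\}$; these are orthonormal one-hot vectors. Then $\bW P$ satisfies exactly the same constraints as $\bW$. Moreover $\|\bW P\|_\star\le \|\bW\|_\star\|P\|_2=\|\bW\|_\star$, so $\bW P$ is also optimal. Uniqueness gives $\bW=\bW P$. Hence every column outside that span, in particular $\bW\bz_{b_i}$, is zero.

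\textbf{Step 2 (permutation symmetry).} Take any permutation $\sigma$ of $[N]$. Let $\Pi$ be the permutation matrix that sends $\bz_{a_j}\mapsto\bz_{a_{\sigma(j)}}$ and $\bz_{b_j}\mapsto\bz_{b_{\sigma(j)}}$, and fixes every other basis vector, including $\bz_{r_{+}}$. Under the map $\bW\mapsto \Pi\bW\Pi^{\mathrm{T}}$, the constraint for $([a_j,r_{+}],s')$ becomes the constraint for $([a_{\sigma(j)},r_{+}],\Pi s')$. As $s'$ ranges over $\cV\setminus\{b_j\}$, its image $\Pi s'$ ranges over $\cV\setminus\{b_{\sigma(j)}\}$, so the feasible set is preserved. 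The nuclear norm is orthogonally invariant, so it is preserved too. Uniqueness then gives $\Pi\bW\Pi^{\mathrm{T}}=\bW$, which yields $\bz_{a_{\sigma(j)}}^{\mathrm{T}}\bW\bz_{r_{+}}=\bz_{a_j}^{\mathrm{T}}\bW\bz_{r_{+}}$ for all $j$ and $\sigma$. Because $N\ge2$, transpositions exist, so all these entries are equal.

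Combining the two steps, $h_{[b_i,r_{-}],a}=\tfrac12(0-0)-\tfrac12\big(\bz_{a_i}^{\mathrm{T}}\bW\bz_{r_{+}}-\bz_a^{\mathrm{T}}\bW\bz_{r_{+}}\big)=0$, which is the claim.

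The main point needing care is Step 2. One must check that the permutation maps the full constraint set, over all $s'\in\cV$ including relation tokens, bijectively onto itself. This holds because $\Pi$ only permutes entity tokens and leaves $\bz_{r_{+}}$ fixed. No explicit computation of the SVM solution is needed.
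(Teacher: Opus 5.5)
Your proof is correct, and it takes a genuinely different and much shorter route than the paper. The paper imports the block form of \cref{lemma:symmetry_solution_no_idn} (parameters $f_1,\dots,m_2,\beta,\alpha$, adapted from Huang et al.) and computes the singular values of that block matrix in closed form. It then rewrites the objective as $\sqrt{M_1}+(N-1)\sqrt{M_2}$ in a lower-bound program and runs a KKT case analysis, first forcing $f_1=0$ and then using tightness of $t\ge f_1m_1$ and $t\ge g_1l_1$, to conclude that the reversal margin $g_1$ vanishes. Your Step~2 directly proves the only part of that symmetry lemma you need, namely the constant $\beta\bone_N$ entries in the $\bz_{r_+}$-column on the $\cA$ rows. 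Your Step~1 replaces the entire nuclear-norm computation and KKT analysis. Because the forward constraints only see $\bW(\bz_{a_j}+\bz_{r_+})$, right-multiplying by $P$ preserves feasibility and does not increase $\|\cdot\|_\star$, so uniqueness forces every $\bz_{b_i}$-column to vanish. That gives $g_1=g_2=m_1=m_2=0$, which is stronger than the paper's $g_1=0$. Your route needs no explicit SVD and does not depend on $d$ or on whether relation tokens are among the competitors $s'$. It also extends to any $\bz_{r_-}$ orthogonal to the entity embeddings, since then $\bW\bz_{r_-}=\bW P\bz_{r_-}$ is a multiple of $\bW\bz_{r_+}$. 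Finally, it makes the mechanism transparent: the reversal query reads only directions the forward data never constrain beyond a permutation-symmetric value. The paper's heavier machinery buys reuse. In \cref{thm:idn_bridge} the identity-bridge data do constrain the $\bz_{b_i}$-columns ($m_1\ge 1$), so your projection trick no longer zeroes them. Proving the strict inequality $g_1>0$ there genuinely needs the quantitative trade-off captured by the closed-form nuclear norm and its KKT conditions, which the paper sets up once and applies to both theorems.
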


\begin{proof}
    See~\cref{sec:proof_reversal_happen}.
\end{proof}

We visualize the solution $\bWOV^{\textup{+}}$ in~\cref{thm:reversal_happen} in~\cref{fig:svm_forward}. From~\cref{fig:svm_forward}, the solution has a positive diagonal in the lower left block, which means the model has learned the forward knowledge. To solve the reversal task, the model must simultaneously store knowledge in the upper-right block. However, training only on the forward relation dataset yields a zero upper-right block, leading to a zero margin and the reversal curse.

\subsection{Breaking the Reversal Curse via Identity Bridge}
\label{sec:theory_idn_bridge}
We have seen in~\cref{thm:reversal_happen} that training with the forward relation only dataset will suffer from the reversal curse. To resolve this problem, we propose to add regularization data to the training dataset, such that the model can encode the reversal knowledge in the weight via gradient.

The regularization data we use is called the ``Identity Bridge'', which was first introduced by~\citet{lin2025identity} to solve two-hop reasoning tasks. They showed that adding identity bridge data can enable the model to perform two-hop reasoning when only trained with one-hop data. In our paper, we take the spirit of this idea and use it to break the reversal curse.

\input{contents/figures/svm_forward}

The identity bridge dataset is in the form of
\begin{equation}
    \cD_\text{idn} = \{[a_i,r_\text{id}|a_i]: i\in [N]  \}\cup \{[b_i,r_\text{id}|b_i]: i\in [N]  \},
\end{equation}
where $r_\text{id}$ is an identity relation.

The identity bridge dataset only contains training samples that map an entity to itself. This identity mapping reveals no information about the reversal knowledge, but it will affect the gradient, thus changing the optimization landscape. 

We set the embedding of the identity relation to be a zero vector, \emph{i.e.}, $\bz_{r_\text{id}}=\bzero_d$. The reason for this choice is that the identity can be viewed as the composite of the forward relation and the reversal relation. Therefore, in this spirit, $\bz_{r_\text{id}} = \bz_{r_{-}} + \bz_{r_{+}}=\bzero_d$, since we have set $\bz_{r_{-}} = - \bz_{r_{+}}$.

Then, we have the following theorem, showing that the identity bridge regularization can break the reversal curse.
\begin{theorem}\label{thm:idn_bridge}
    Suppose~\cref{assum:fix_kq} holds, $N\geq 2$, and the problem~\eqref{eq:SVM} with identity bridge regularized training set $\cD = \cD_{r_{+}} \cup \cD_\text{idn}$ admits a single optimal solution $\bWOV^{\ast}$. Then, $\forall [b,r_{-}] \in \cD_{r_{-}}$, we have
    \begin{equation}
        h_{[b,r_{-}],a}(\bWOV^{\ast}) > 0, \forall a\in \cA \cup \cB \backslash \{ r_{-}(b) \}.
    \end{equation}

    Thus, the model can generalize to the reversal relation.
\end{theorem}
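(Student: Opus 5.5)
The plan is to make \eqref{eq:SVM} explicit for $\cD=\cD_{r_{+}}\cup\cD_\text{idn}$, use the assumed uniqueness of $\bWOV^{\ast}$ to collapse it to a problem with a handful of scalar parameters, and then read off the reversal margins.

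\textbf{Step 1: write the constraints in coordinates.} Index the rows and columns of $W:=\bWOV^{\textup{F}}$ by the one-hot coordinates $a_1,\dots,a_N,b_1,\dots,b_N$ and the coordinate $r$ of $\bz_{r_{+}}$. By \eqref{eq:transformer_logit}, with $\bz_{r_-}=-\bz_{r_+}$ and $\bz_{r_\text{id}}=\bzero_d$, the logit of $y$ is:
\begin{itemize}
\item $\tfrac12(W_{y,a_i}+W_{y,r})$ on $[a_i,r_{+}]$;
\item $\tfrac12 W_{y,s}$ on $[s,r_\text{id}]$;
\item $\tfrac12(W_{y,b_i}-W_{y,r})$ on $[b_i,r_{-}]$.
\end{itemize}
The training constraints therefore become
\begin{equation*}
W_{b_i,a_i}+W_{b_i,r}-W_{y,a_i}-W_{y,r}\ge 2\ \ (y\neq b_i), \qquad W_{s,s}-W_{y,s}\ge 2\ \ (y\neq s).
\end{equation*}
The target is to show, for every $y\neq a_i$,
\begin{equation*}
W_{a_i,b_i}-W_{y,b_i}-(W_{a_i,r}-W_{y,r})>0 .
\end{equation*}

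\textbf{Step 2: use symmetry to reduce the number of parameters.} Let $\pi$ be a permutation of $[N]$, and let $P_\pi$ be the permutation matrix that moves $a_i\mapsto a_{\pi(i)}$ and $b_i\mapsto b_{\pi(i)}$ simultaneously and fixes $r$. The map $W\mapsto P_\pi W P_\pi^{\mathrm T}$ preserves both the feasible set and the nuclear norm. Uniqueness of the optimum then forces $\bWOV^{\ast}=P_\pi\bWOV^{\ast}P_\pi^{\mathrm T}$ for every $\pi$, so $\bWOV^{\ast}$ is determined by a few scalars:
\begin{itemize}
\item diagonal and off-diagonal values in each of the four $N\times N$ blocks, namely $(\alpha_1,\alpha_2)$ in $\cA\times\cA$, $(\beta_1,\beta_2)$ in $\cB\times\cA$, $(\gamma_1,\gamma_2)$ in $\cA\times\cB$, and $(\delta_1,\delta_2)$ in $\cB\times\cB$;
\item constant values $c_a,c_b$ of the $r$-column on $\cA$ and $\cB$;
\item constant values of the $r$-row on $\cA$ and on $\cB$, and the corner entry $W_{r,r}$.
\end{itemize}
Each block has the form $x_1 I+x_2(\bone\bone^{\mathrm T}-I)$, so it is diagonalized by the all-ones vector and its orthogonal complement. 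Hence $W$ splits orthogonally into:
\begin{itemize}
\item a $2\times2$ matrix $M_\perp=\begin{pmatrix}\alpha_1-\alpha_2&\gamma_1-\gamma_2\\ \beta_1-\beta_2&\delta_1-\delta_2\end{pmatrix}$ acting on the complement with multiplicity $N-1$, which is where $N\ge2$ enters;
\item a $3\times3$ "mean" matrix $M_0$ involving the $\bone$-directions and $r$.
\end{itemize}
This gives $\|W\|_\star=(N-1)\|M_\perp\|_\star+\|M_0\|_\star$.

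\textbf{Step 3: translate the reversal margins.} In the reduced variables the margins read as follows:
\begin{itemize}
\item against $y=a_j$: $\gamma_1-\gamma_2$;
\item against $y=b_i$: $\gamma_1-\delta_1+c_b-c_a$;
\item against $y=b_j$: $\gamma_1-\delta_2+c_b-c_a$;
\item against $y=r$ (if $r$ is in $\cV$): an analogous expression in the $r$-row.
\end{itemize}
I would then solve the reduced program exactly, for instance in the symmetric ansatz $\alpha=\delta$ with $M_\perp$ symmetric, and verify optimality with a dual certificate via KKT. The expected mechanism has two parts:
\begin{itemize}
\item The identity constraints force $\alpha_1-\alpha_2,\ \delta_1-\delta_2\ge2$, while forward fitting needs $\beta_1-\beta_2\ge2$. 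Nuclear-norm minimization on $M_\perp$ then prefers a near-rank-one $M_\perp$, which makes $\gamma_1-\gamma_2>0$. Without identity data the optimum is instead $\gamma_1=\gamma_2$, which is \cref{thm:reversal_happen}.
\item The forward constraints against $y\in\cA$ push $c_b-c_a$ up, since the $r$-column helps every forward margin. The margins against $y\in\cB$ then require $c_b-c_a>\delta_1-\gamma_1$.
\end{itemize}

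\textbf{Main obstacle.} The hard step is this last inequality, comparing the $r$-column gap with the $\cB$-diagonal excess, together with certifying optimality in the coupled $3\times3$ block $M_0$, where the nuclear norm is not separable. I would first compute the optimizer of the $M_\perp$ part in closed form. I would then check that the active forward constraints at the optimum give $c_b-c_a=2-(\beta_1-\alpha_2)$-type identities. From these, $\delta_1-\gamma_1<c_b-c_a$ should follow by substitution, and I would confirm it with explicit KKT multipliers.
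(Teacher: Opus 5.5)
\textbf{Verdict: genuine gap.} Your Steps 1--3 are correct and coincide with the paper's reduction. Permutation invariance plus uniqueness gives the block form of its symmetry lemma, $\|W\|_\star=(N-1)\|M_\perp\|_\star+\|M_0\|_\star$, and the two margins to show positive are $\gamma_1-\gamma_2$ and $\gamma_1-\delta_1+c_b-c_a$. The problem is that the part which actually proves these are positive is only announced (``should follow by substitution''), and the route you suggest for it fails.

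\textbf{The symmetric ansatz is wrong.} Normalize margins to $1$ as in the paper and take $N=2$. The reduced program (in $M_\perp$ and the row differences of $M_0$) is convex, and its KKT conditions hold at $M_\perp\approx\begin{pmatrix}1&0.44\\1&1\end{pmatrix}$. There the $\cA\times\cA$ block has (diagonal, off-diagonal) $=(1,0)$, while the $\cB\times\cB$ block has $\approx(0.86,-0.14)$. So no point of the ansatz ``$\alpha=\delta$, $M_\perp$ symmetric'' is optimal, and no dual certificate for one exists.

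\textbf{There is no stand-alone $M_\perp$ problem to solve in closed form.} The entries of $M_\perp$ enter the constraints jointly with the mean block. For example, in the paper's notation the identity constraint of $b_i$ against $a_i$ becomes $(N-1)m_1+c_{m-g}\ge(N-1)g_1+N$, where $c_{m-g}$ is a row difference of $M_0$. This coupling is exactly what drags $\gamma_1-\gamma_2$ from the decoupled minimizer $M_\perp=\bone\bone^{\mathrm T}$ (value $1$) down to $\approx0.44$. Hence the near-rank-one heuristic alone does not exclude $\gamma_1-\gamma_2\le 0$. Solving exactly instead leads to a quartic stationarity equation, which is not a practical path.

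\textbf{How the paper closes the gap without solving the program.}
\begin{itemize}
\item The constraints see $M_0$ only through the difference of its two rows, $(c_{f-l},-c_{m-g},\sqrt N\,c_{\beta-\alpha})$. So $\|M_0\|_\star$ can be replaced by the Euclidean norm of that difference divided by $\sqrt2$, attained when the rows are opposite. This removes your non-separable $3\times3$ obstacle.
\item Feasibility alone gives $c_{\beta-\alpha}\le-2$: add the forward constraint against $a_i$ to the identity constraint of $a_i$ against $b_i$. This is your ``$c_b-c_a$ pushed up'', now with a concrete bound.
\item The feasible point $f_1=g_1=l_1=m_1=1$ caps the optimal value at $3N-1$.
\item Write $\sqrt{M_1}$ and $\sqrt{M_2}$ for the reduced mean-block and $M_\perp$ terms. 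If $g_1\le0$, KKT stationarity forces the constraint displayed above to be tight and gives $\sqrt{M_2}\ge2\sqrt{M_1}$. Then the objective is at least $(2N-1)\sqrt{2N}>3N-1$, a contradiction.
\item For the $\cB$-margin there are two cases. If that constraint is tight, $N$ times the margin equals $-N-Nc_{\beta-\alpha}\ge N$. Otherwise its multiplier vanishes, which forces $c_{m-g}=0$; comparison with $3N-1$ then gives $m_1\le2$, and hence a positive margin.
\end{itemize}
These benchmark-versus-KKT bounds are the ingredient your plan is missing.
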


\begin{proof}
    See~\cref{sec:proof_idn_bridge}.
\end{proof}
We visualize the $\bWOV^{\ast}$ solution in~\cref{thm:idn_bridge} with the identity bridge regularized training dataset in~\cref{fig:svm_idn}. We note that, in this case, the solution has a positive upper-right block, which means the model has encoded the reversal knowledge in the weights. 

\input{contents/figures/svm_identity_bridge}

An intuitive explanation of why the identity bridge can help is that the identity training data forces the model to have positive weights at its two diagonal blocks, as observed in~\cref{fig:svm_idn}, and to minimize the nuclear norm of $\bW_{OV}^\ast$, the upper-right block is required to have positive diagonal values.

In addition to this explanation, we find that the identity bridge is closely related to the following out-of-context reasoning phenomenon, which can provide more insight into the working mechanism of our method.

\subsection{Relation to Out-of-Context Reasoning}
\label{sec:theory_relation_OCR}
\begin{figure*}[t]
  \begin{center}
    \centerline{\includegraphics[width= 1.8 \columnwidth]{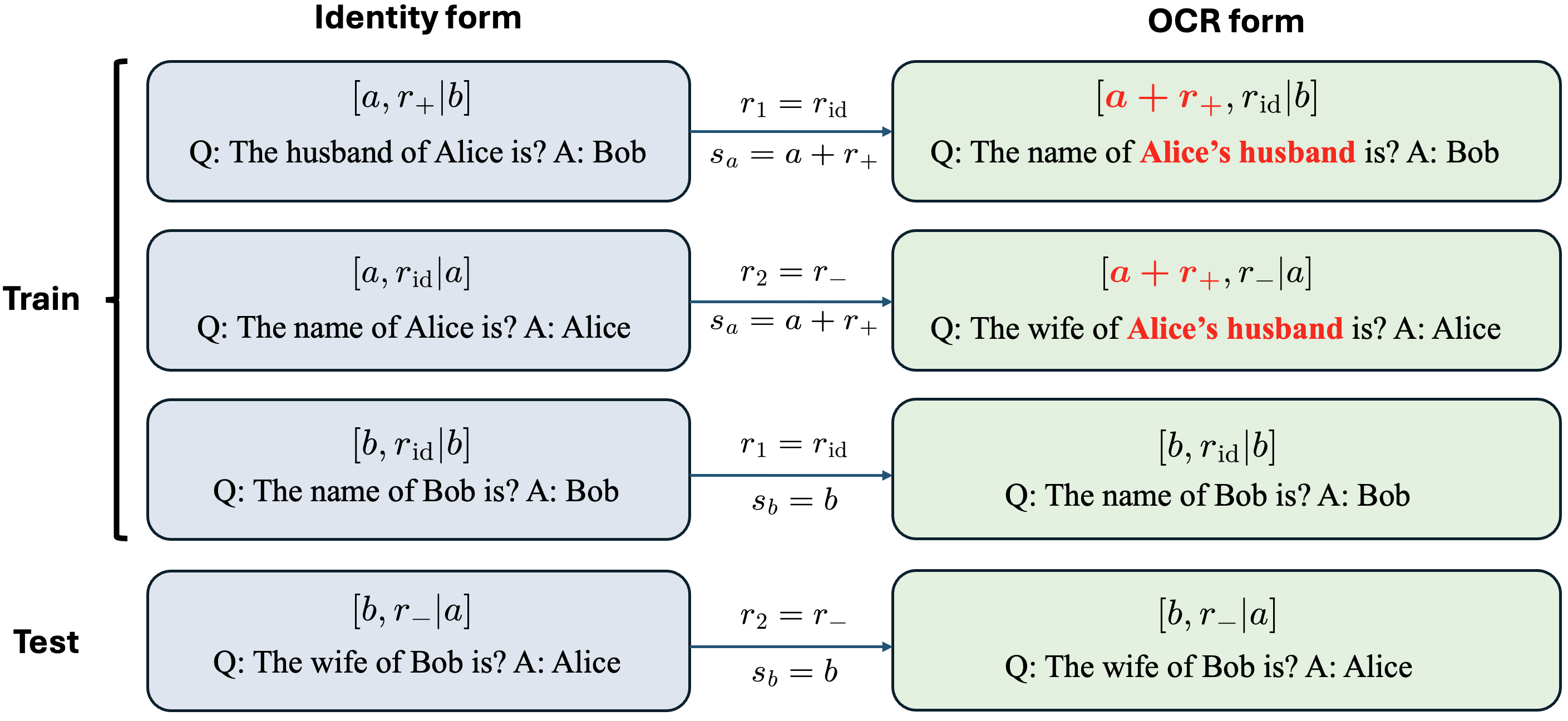}}
    \caption{
      Illustration of~\cref{prop:relation_OCR} when $\bz_{r_{+}} = - \bz_{r_{-}}$. Identity form refers to the form of the identity regularized dataset, and OCR form refers to its OCR form given in~\cref{prop:relation_OCR}. In the concrete ``Husband-Wife'' example in the Identity form, we set $a$ to ``Alice'', $b$ to ``Bob'', $r_{+}$ to ``husband'', $r_{-}$ to ``wife'', and $r_\text{id}$ to ``name''. In the corresponding OCR form, $s_a = a+ r_{+}$ is ``Alice's husband'', $s_b = b$ is Bob, $r_1 = r_\text{id}$ is ``name'', and $r_2 = r_{-}$ is ``wife''. The left Identity form and the right OCR form are semantically equivalent and share the same test dataset.
    }
    \label{fig:illus_ocr}
  \end{center}
\end{figure*}

Out-of-context reasoning (OCR)~\citep{cohen2024evaluating,huang2025generalization} refers to a model’s ability to deduce implications beyond the explicitly trained knowledge by drawing connections between different pieces of learned knowledge. Following \citet{huang2025generalization}, in an OCR task, one has a training subject set $\cS_\text{train}$, a test subject set $\cS_\text{test}$, two relations $r_1,r_2$, a set of facts $\cF$, and a set of implications $\cI$. Relation $r_1$ maps a subject to a fact in $\cF$ and $r_2$ maps a subject to an implication in $\cI$, such that $\forall s_1, s_2 \in \cS_\text{train} \cup \cS_\text{test}$, if they share the same fact $r_1(s_1) = r_1(s_2)$, then they have the same implication $r_2(s_1) = r_2 (s_2)$. An OCR task is to train the model with the dataset $\cD_\text{train} = \{ [s,r_1|r_1(s)]: s\in  \cS_\text{train} \cup \cS_\text{test} \} \cup \{ [s,r_2|r_2(s)]: s\in  \cS_\text{train} \}$ which contains implications on $\cS_{\text{train}}$ and all facts, and test the implications on $\cS_\text{test}$, \emph{i.e.}, $\cD_\text{test} = \{ [s,r_2|r_2(s)]: s\in  \cS_\text{test} \}$. For example, if we take $r_1$ to be ``lives in'', and $r_2$ to be ``speaks'', then the training dataset looks like $\{ $ [``Jone'' ,``lives in'' $|$ ``Japan''], [``Jone'', ``speaks'' $|$ ``Japanese'' ], [``Mike'',``lives in'' $|$ ``Japan''] $ \}$, and we expect the model can deduce ``Mike speaks Japanese''.

We show in the following proposition that the identity bridge regularized reversal task is equivalent to an OCR task if we select an appropriate identity relation embedding.  
\begin{proposition}\label{prop:relation_OCR}
    For arbitrary embedding of entities $\bz_{s},s\in \cA \cup\cB$ and forward and reversal relations $\bz_{r_{+}},\bz_{r_{-}}$, if the identity relation embedding is selected as $\bz_{r_\text{id}} = (\bz_{r_{+}} + \bz_{r_{-}})/2$, then there exists an OCR task equivalent to the regularized reversal task in the sense that they have the same test dataset and their training datasets lead to the same SVM problem~\eqref{eq:SVM}.
\end{proposition}

\begin{proof}
    The idea is to use the linearity of the one-layer transformer model~\eqref{eq:transformer_logit} and rewrite the input sequence $[s,r]$ to an appropriate form. Specifically, we construct the OCR task as follows.\allowdisplaybreaks
    \begin{align}\allowdisplaybreaks
        \cS_\text{train} =& \{ s_a^{(i)}:i\in [N] \}, \bz_{s_a^{(i)}} = \bz_{a_i} + \frac{\bz_{r_{+}} - \bz_{r_{-}}}{2} \\
         \cS_\text{test} = &\{ s_b^{(i)}:i\in [N] \}, \bz_{s_b^{(i)}} =\bz_{b_i} \\
         \bz_{r_1} =& \bz_{r_\text{id}}, \bz_{r_2} = \bz_{r_{-}}\\
         \cF = & \{ u_b^{(i)}: i\in [N]\} , \bz_{u_b^{(i)}} = \bz_{b_i} \\
        \cI = &  \{ u_a^{(i)}: i\in [N] \}, \bz_{u_a^{(i)}} =\bz_{a_i}\\
         r_1(s_a^{(i)}) = & u_b^{(i)}, r_2(s_a^{(i)}) = u_a^{(i)}\\
         r_1(s_b^{(i)}) = & u_b^{(i)}, r_2(s_b^{(i)}) = u_a^{(i)}\\
             \cD_\text{train-OCR}  
         = & \{ [s_a^{(i)},r_1|u_b^{(i)}], [s_a^{(i)},r_2|u_a^{(i)}], [s_b^{(i)},r_1|u_b^{(i)}] : i \in [N] \}
        \\
         \cD_\text{test-OCR} =& \{ [s_b^{(i)},r_2|u_a^{(i)}] : i \in [N] \}.
    \end{align}

    In this construction, the test dataset $\cD_\text{test-OCR}$ is the same as that of the reversal task. To verify the equivalence of the SVM form, we only need to check the sum of the subject embedding and the relation embedding. Specifically, we have
    \begin{align}
        \bz_{s_a^{(i)}} + \bz_{r_1} = \bz_{a_i} + \bz_{r_{+}}\\
        \bz_{s_a^{(i)}} + \bz_{r_2} = \bz_{a_i} + \bz_{r_\text{id}}\\
        \bz_{s_b^{(i)}} + \bz_{r_1} = \bz_{b_i} + \bz_{r_\text{id}}.
    \end{align}

    Therefore, $\cD_\text{train-OCR}$ leads to the same SVM problem as the identity bridge regularized dataset $\cD_{r_{+}}\cup \cD_\text{idn}$.
\end{proof}

We give an illustration of~\cref{prop:relation_OCR} in~\cref{fig:illus_ocr}, where we assume $\bz_{r_{+}} = - \bz_{r_{-}}$, so that $\bz_{s_a^{(i)}} = \bz_{a_i} + \frac{\bz_{r_{+}} - \bz_{r_{-}}}{2} = \bz_{a_i} + \bz_{r_{+}}$. In~\cref{fig:illus_ocr}, we give a concrete ``Husband-Wife'' example to show how this form equivalence works in real-world language data. In the example, we set $a$ to ``Alice'', $b$ to ``Bob'', $r_{+}$ to ``husband'', $r_{-}$ to ``wife'', and $r_\text{id}$ to ``name''. The trick in the form transformation is using the equality $r_\text{id} = r_{+} + r_{-}$, \emph{i.e.}, ``name'' = ``wife'' + ``husband'', and setting $s_a = a+ r_{+}$, \emph{i.e.}, ``Alice's husband''. For other parts in the corresponding OCR form, $s_b = b$ is Bob, $r_1 = r_\text{id}$ is ``name'', and $r_2 = r_{-}$ is ``wife''. The Identity form and the OCR form are semantically equivalent and share the same test dataset.

We note that the identity relation embedding used in~\cref{sec:theory_idn_bridge} also satisfies the condition in~\cref{prop:relation_OCR}, and \cref{prop:relation_OCR} provides another explanation of why the identity bridge regularization can help break the reversal curse from the perspective of OCR. 

\cref{prop:relation_OCR} allows us to transform a reversal task into an OCR task, and we will show in the experiment that this transformation is critical to break the reversal curse in real LLMs.

%% file: contents/figures/svm_forward.tex
\begin{figure}[tbhp!]
     \centering
     \begin{minipage}{0.45\textwidth}
         \centering
         \includegraphics[width= 2\columnwidth]{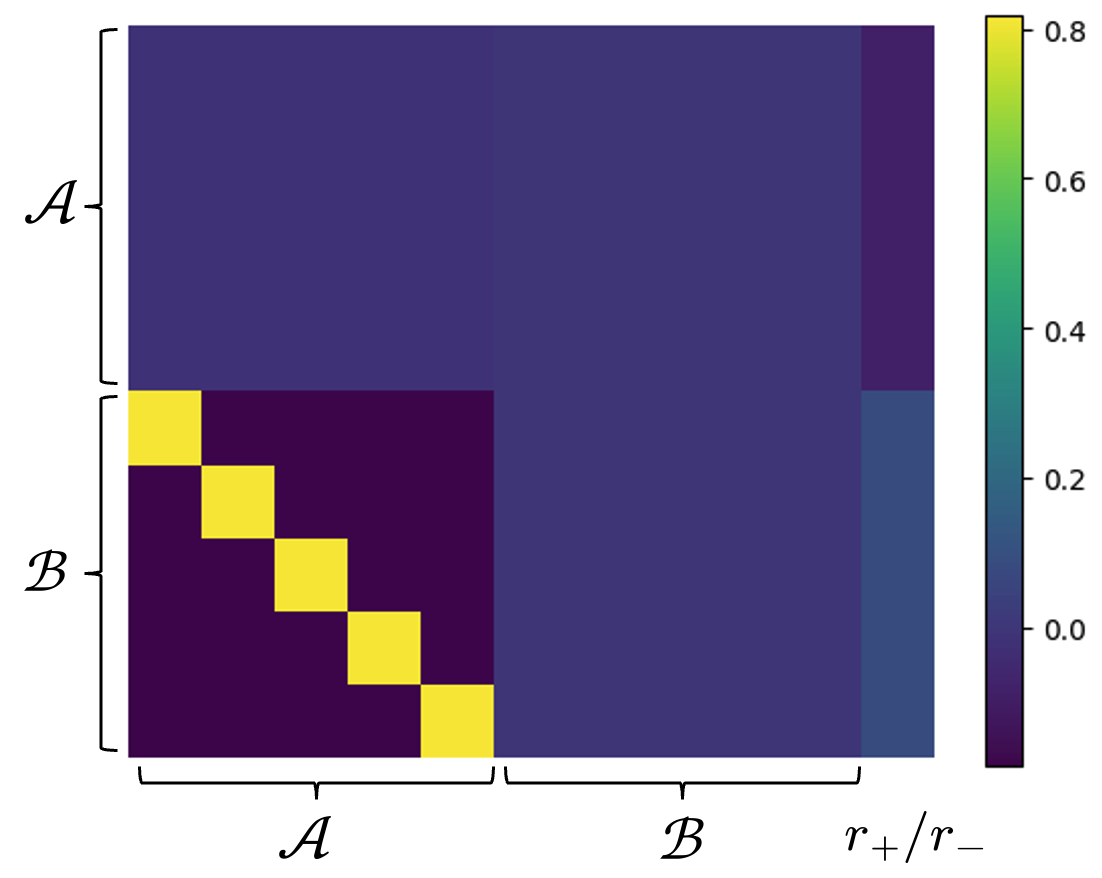}
         \caption{$\bW_{OV}^{\textup{+}}$ solution in~\cref{thm:reversal_happen} with forward relation dataset. In $\bW_{OV}^{\textup{+}}$, the diagonal weight of the upper right block is equal to the off-diagonal weight of it, which means when tested with the reversal data $[b_i,r_{-}]$, the trained model will output equal logits over all $a\in \cA$. Thus, only training with the forward relation dataset will lead to the reversal curse.}
         \label{fig:svm_forward}
     \end{minipage}
     \hfill 
     \begin{minipage}{0.45\textwidth}
         \centering
         \includegraphics[width= 2 \columnwidth]{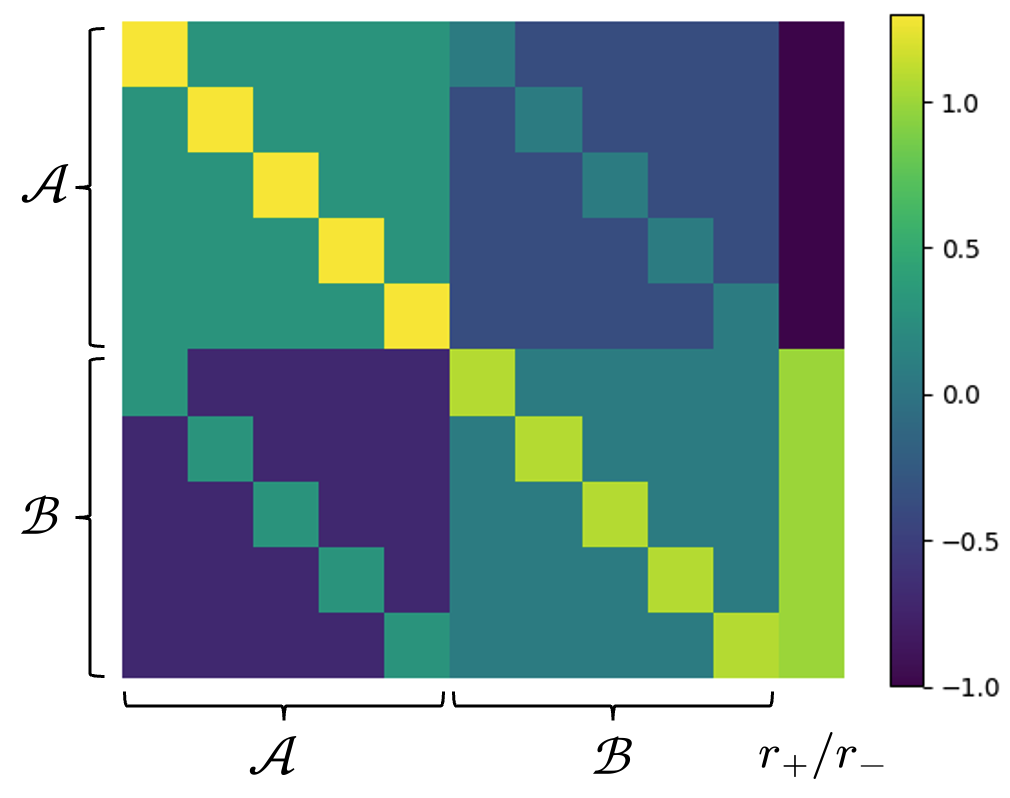}
         \caption{      $\bW_{OV}^\ast$ solution in~\cref{thm:idn_bridge} with identity bridge regularized dataset. In $\bW_{OV}^{\ast}$, the diagonal weight of the upper right block is larger than the off-diagonal weight of it, so the logits of the correct reversal answer $a_i$ will be larger than other answers when tested with $[b_i,r_{-}]$. Thus, the model can break the reversal curse with identity bridge regularized dataset.}
         \label{fig:svm_idn}
     \end{minipage}
\end{figure}

%% file: contents/figures/svm_identity_bridge.tex


%% file: contents/experiments.tex
\section{Experiments}
In this section, we conduct experiments on both one-layer transformers and pretrained real-world LLMs to validate our method. In~\cref{sec:exp_one_layer_tf}, we present experiments on one-layer transformers to verify the theoretical results in~\cref{sec:theoretical_res}. Then, in~\cref{sec:exp_real_llm}, we conduct experiments on real LLMs to solve real-world reversal tasks. Detailed experiment setups and additional experimental results can be found in~\cref{app:experiment_detail}.

\subsection{One-Layer Transformer Experiments}
\label{sec:exp_one_layer_tf}
We experiment with the reversal task with $N=10$ pairs, and the embeddings are set as $\bz_{a_i} = \be_{i},\bz_{b_i} = \be_{N+i},\bz_{r_{+}} = \be_{2N+1}, \bz_{r_{-}} = - \be_{2N+1}, \bz_{r_\text{id}} = \bzero_d$. 

The reversal test loss and mean reciprocal rank (MRR) of forward relation only training data and identity bridge regularized data are presented in~\cref{fig:one_layer_curve}. From~\cref{fig:one_layer_curve}, we observe that, without the identity bridge, the reversal test MRR stays around the initialization level, while adding identity bridge regularization, the model can generalize to all the reversal tests.
\begin{figure}[t]
  \vskip 0.2in
  \begin{center}
    \centerline{\includegraphics[width=  1 \columnwidth]{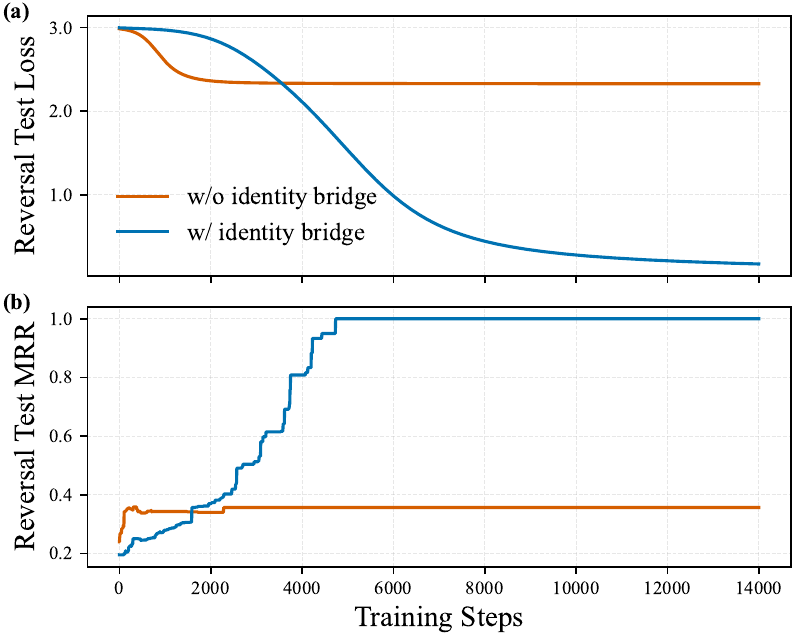}}
    \caption{
      {Reversal test loss and mean reciprocal rank (MRR) of forward relation only training data vs. identity bridge regularized data.} Without the identity bridge, the model stays around the initialization level, while the model can generalize to all the reversal tests after adding the identity bridge regularization.
    }
    \label{fig:one_layer_curve}
  \end{center}
  \vspace{-20pt}
\end{figure}

We also present in~\cref{fig:train_forward_tf} the $\bWO\bWV^{\mathrm{T}}$ weight after training with the forward relation dataset and the identity bridge regularized dataset, respectively. The trained $\bWO\bWV^{\mathrm{T}}$ weights match the theoretical results in~\cref{thm:reversal_happen} and~\cref{thm:idn_bridge}.

\begin{figure}[htbp!]
     \centering
     \begin{minipage}{0.48\textwidth}
         \centering 
         \includegraphics[width=2.5 \columnwidth]{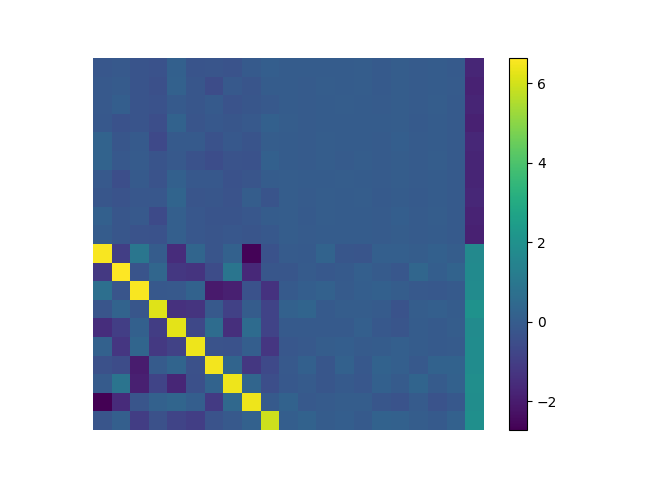}
         \sublabel{a}
     \end{minipage}
     \hfill 
     \begin{minipage}{0.48\textwidth}
         \centering
         \includegraphics[width=2.5 \columnwidth]{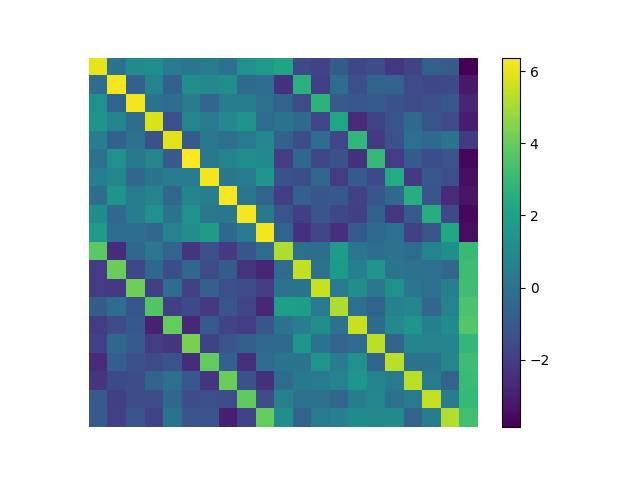}
         \sublabel{b}
     \end{minipage}
     \caption{(a). $\bWO\bWV^{\mathrm{T}}$ weight after training with the forward relation dataset; (b) $\bWO\bWV^{\mathrm{T}}$ weight after training with the identity bridge regularized dataset. The trained weights match the SVM solutions in \Cref{thm:reversal_happen} and \Cref{thm:idn_bridge}.}
     \label{fig:train_forward_tf}
\end{figure}



\subsection{Real Large Language Model Experiments}
\label{sec:exp_real_llm}
In this part, we consider two real-world reversal tasks, ``Husband-wife'' and ``Parent-Child''. In each task, the entity names are drawn from real-life names, and we randomly associate them to form $100$ reversal pairs. We fine-tune a Llama-3.2-1B-Instruct model to solve these tasks. Each experiment is run on three random seeds.

The training dataset and test dataset contain prompts in the following  (the same as in~\cref{prop:relation_OCR} and~\cref{fig:illus_ocr}), where we take the ``Husband-wife'' task for an example.

Training dataset:
\begin{enumerate}
    \item $[a+r_{+},r_\text{id}|b]$: Q: The name of Alice's husband is? A: Bob.\label{item:prompt_a_to_b} (Repeated $k$ times)
    \item $[a+r_{+},r_{-}|a]$: Q: The wife of Alice's husband is? A: Alice.\label{item:prompt_a_to_a}
    \item $[b,r_\text{id}|b]$: Q: The name of Bob is? A: Bob. \label{item:prompt_b_to_b}
\end{enumerate}

Test dataset:
\begin{itemize}
    \item $[b,r_{-}|a]$: Q: The wife of Bob is? A: Alice.
\end{itemize}

In the identity bridge data for $a$, instead of using a prompt like ``The name of Alice is?'', we rewrite it by decomposing the identity relation ``The name of'' to be the composition of the forward and the reversal relation, \emph{i.e.}, ``The wife of $a$'s husband''. This rewriting trick transforms the reversal task into its OCR form (See~\cref{prop:relation_OCR}). In this equivalent OCR task, the training subject ``Alice's husband'' (\emph{i.e.}, $a+r_+$) has two relations, ``name'' and ``wife''. The test subject is ``Bob'', which has seen one of the relation ``name'' in the training set, and the task is to generalize to the other relation ``wife''.

In the forward relation~\ref{item:prompt_a_to_b}, we duplicate this data $k$ times in the training, and our experiments (\cref{sec:exp_ablation_dataset} and~\cref{fig:dataset_compare}) show this is important for generalization.

The reversal test results of the ``Husband-Wife'' task and the ``Parent-Child'' task are presented in~\cref{fig:task_compare}, where the OCR stands for the proposed method with the training dataset introduced before, where we take $k=6$, and FWD represents the forward relation-only dataset (only ``Q: The husband of Alice is? A: Bob.''). In the experiment, we test two metrics: reversal test accuracy and reversal test loss. The reversal test accuracy is the fraction of reversal questions the model answers correctly, and the reversal test loss is the next-token prediction loss on the ground-truth name answer. For each task, both FWD and our OCR method can optimize the training loss to zero. However, the FWD method can not generalize to the reversal relation, and the reversal test accuracy stays zero. On the contrary, our method augments the dataset with identity bridge data and can successfully break the reversal curse, where the reversal test accuracy keeps growing to nearly 50\%.


\begin{figure}[thpb]
  \vspace{-20pt}
  \begin{center}
    \centerline{\includegraphics[width= 1 \columnwidth]{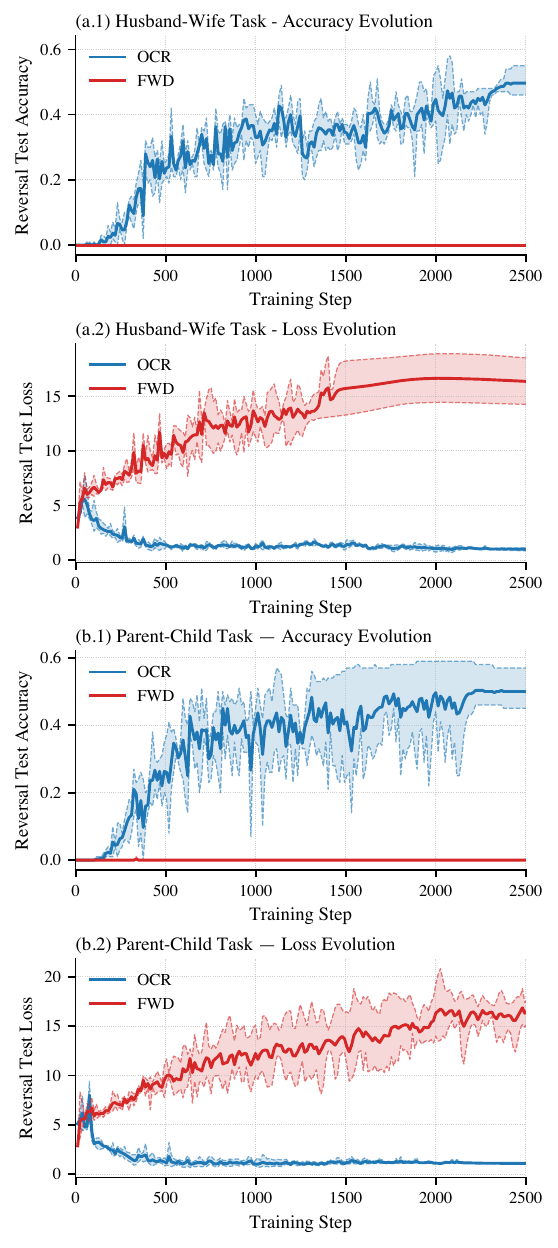}}
    \caption{
      Reversal test results of the ``Husband-Wife'' task (a.1 and a.2) and the ``Parent-Child'' task (b.1 and b.2): OCR stands for the proposed method using identity bridge augmented data (\emph{i.e.},~\ref{item:prompt_a_to_b}$\times 6$+\ref{item:prompt_a_to_a}+\ref{item:prompt_b_to_b}), and FWD represents the forward relation-only dataset (only ``Q: The husband of Alice is? A: Bob.''). The reversal test accuracy of the FWD dataset stays around 0, while our identity bridge regularized dataset OCR can break the reversal curse to around 50\% of the reversal questions in both tasks.
    }
    \label{fig:task_compare}
  \end{center}
   \vspace{-30pt}
\end{figure}

\begin{figure*}[t]
  \begin{center}
    \centerline{\includegraphics[width=2\columnwidth]{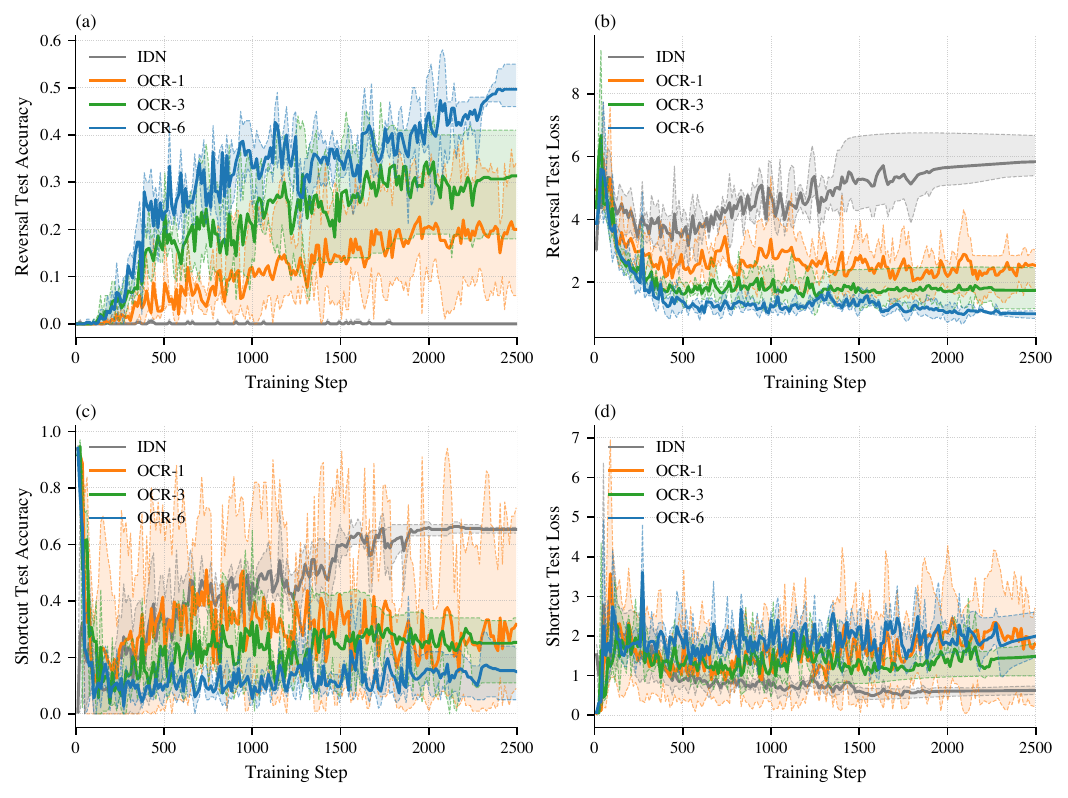}}
    \caption{
      Test results for different identity bridge formats. (a). Reversal test accuracy; (b). Reversal test loss; (c). Shortcut test accuracy; (d). Shortcut test loss. \shaw{(c) and (d) test whether the model learns the shortcut from the identity bridge data to directly copy the subject's name.}
      With the IDN dataset, the model never learns the reversal knowledge, while in the OCR form case, the model can generalize to the reversal relation. In the OCR form, repeated forward relation data greatly helps to generalize to the reversal relation, with the OCR-$6$ dataset reaching the highest reversal test accuracy of 50\%.
    }
    \label{fig:dataset_compare}
  \end{center}
\end{figure*}

\subsubsection{Ablation Experiments on Identity Bridge Format}
\label{sec:exp_ablation_dataset}
In this part, we focus on the ``Husband-Wife'' task and investigate the effect of the format of the identity bridge data on the model's reversal generalization ability. According to~\cref{prop:relation_OCR}, the identity bridge regularized dataset has the following IDN format and OCR format:
\begin{itemize}
    \item Identity (IDN) format dataset
        \begin{itemize}
            \item Q: The husband of Alice is? A: Bob.
            \item Q: The name of Alice is? A: Alice.
            \item Q: The name of Bob is? A: Bob.
        \end{itemize}
    \item OCR-$k$ format dataset:~\ref{item:prompt_a_to_b} $\times k$ +~\ref{item:prompt_a_to_a} +~\ref{item:prompt_b_to_b}.
\end{itemize}


In the IDN dataset, the identity relation of $a$ is directly phrased as ``The name of a is a'', instead of using the rephrasing trick as in~\ref{item:prompt_a_to_a}. Although these two formats are semantically equivalent, their generalization ability is fundamentally different.

Except for the reversal test data, we further consider the following shortcut test data
\begin{itemize}
    \item $[b,r_{-}|b]$: Q: The wife of Bob is? A: Bob.
\end{itemize}

The shortcut test data measures whether the model learns a shortcut to copy the name token after ``wife'' into its output, which can be learned from data~\ref{item:prompt_a_to_a}, resulting in a prediction that contradicts the ground-truth knowledge.

The test results of these datasets are shown in~\cref{fig:dataset_compare}. Across all datasets, the shortcut test accuracy quickly rises to near 100\% within a few gradient steps, then decrease to near 10\% in the first 200 steps, and finally slowly grows over the remaining steps. This indicates that the model learns the shortcut much faster than the reversal knowledge, and then the shortcut learning compete with the reversal relation learning. With the IDN dataset, the model never learns the reversal knowledge and the shortcut test accuracy reached to around 60\%. While under the OCR form, the model is able to break the reversal curse (as shown in the OCR dataset case), and the repeated forward relation data greatly helps to generalize to the reversal relation, with the OCR-$6$ dataset reaching the highest reversal test accuracy of 50\%. This also indicates that we should make identity bridge function as a regularization and the main learning gradient should come from the informative forward relation data.
Together, these results suggest that the identity bridge does not work {without reforming to its OCR form}, which is ignored by other works studying the identity bridge~\citep{lin2025identity}.

\subsubsection{Ablation Experiments on Entity Token Length}
In this part, we investigate the influence of entity token length on the model's reversal generalization ability. All the experiments are on the ``Husband-Wife" task. We consider three types of names: number name (1-token, e.g., ``34''), normal name (2-token, e.g., ``Sophia''), and long name (3-token, e.g., ``Catalina''). The reversal test results are presented in~\cref{fig:token_compare}. From~\cref{fig:token_compare}, a surprising fact is that when the name is a one-token number, the reversal test accuracy can reach nearly 100\%. But when the name is three tokens long, the test accuracy decreases to 10\%. This result suggests that entities with fewer tokens can be learned better in the reversal task, which is also observed in~\citet{wang2025reversal}.

\begin{figure}[t]
  \begin{center}
    \centerline{\includegraphics[width= 1 \columnwidth]{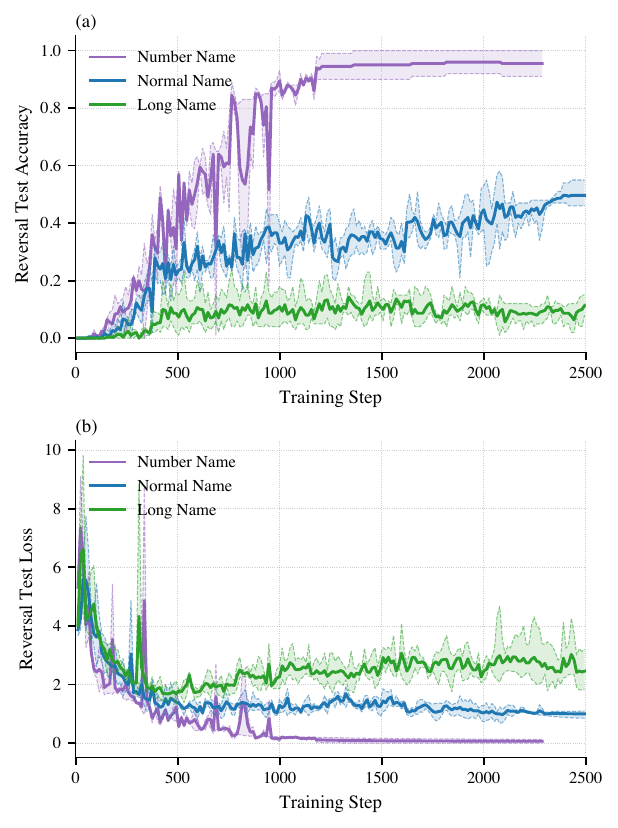}}
    \caption{
      Reversal test results for different token lengths. (a). Reversal test accuracy; (b). Reversal test loss. The model generalizes better on shorter names, even reaching 100\% accuracy on one-token number names.
    }
    \label{fig:token_compare}
  \end{center}
\end{figure}

\subsubsection{Ablation Experiments on Weight Decay}

In this part, we investigate the effect of weight decay.~\citet{kobayashi2024weight} showed that weight decay has an implicit bias toward minimum nuclear norm over factorized matrices, which has same effect as~\cref{lemma:SVM} to facilitate reversal generalization. Empirically, we also observe this effect in the experiment. The ablation result of reversal test accuracy across weight decay values are shown in~\cref{tab:wd-sweep}. From~\cref{tab:wd-sweep}, the reversal generation is weak at both small weight decay (e.g., 0.25) and large weight decay (e.g., 0.45). While in the active band, the weight decay greatly helps the generalization, with the highest reversal test accuracy of 55\%, which is the value of take in the main experiments.

\begin{table}[htpb!]
  \centering
  \caption{Mean reversal test accuracy across weight decay values.}
  \label{tab:wd-sweep}
  \begin{tabular}{cc}
  \toprule
  Weight Decay & Reversal Test Accuracy \\
  \midrule
  0.15 & 0.12 \\
  0.20 & 0.50 \\
  0.25 & 0.48 \\
  0.30 & 0.55 \\
  0.35 & 0.43 \\
  0.40 & 0.42 \\
  0.45 & 0.33 \\
  \bottomrule
  \end{tabular}
  \end{table}

%% file: contents/related_work.tex
\section{Related Work}

\paragraph{Reversal curse.} 
The reversal curse phenomenon was first reported by~\citet{berglund2023reversal}. Extensive prior works have attempted to understand or resolve the reversal curse. \citet{zhu2024towards} analyzed the training dynamics of different autoregressive models and showed that the reversal curse is caused by the model weights asymmetry. \citet{lin2024delving} suggested that the reversal curse stems from the factual recall bias in LLMs. \citet{wang2025reversal} conjectured that it is the inconsistency and entanglements of concept representations in LLMs incurs the reversal curse.
To mitigate the reversal curse, several works~\citep{golovneva2024reverse, guo2024mitigating, lu2024rethinking, pan2025closing} proposed data-centric interventions such as reordering or paraphrasing training examples, or training on masked variants, while others~\citep{lv2024analysis,kitouni2024factorization} attributed the issue to the training objective itself and attempt to address it by modifying the learning formulation. However, these methods significantly change the existing training paradigms, which can harm the model performance. Separately, \citet{nie2025large} finds that diffusion language models trained with a bidirectional denoising objective is able to break the reversal curse, further suggesting that the issue is closely tied to the autoregressive training paradigms. These works all failed to resolve the reversal curse in a reasonable way, leaving it a seemly impossible task. But our work challenges this view and breaks the reversal curse just by adding regularization to the training data.
\paragraph{Training dynamics of transformers.} 
A growing body of literature examines the optimization of transformer-based models~\citep{mahankali2023one,fu2023can,tian2023scan,tian2023joma,zhang2024trained,li2024mechanics,huang2024context,guo2024active}. 
In particular, recent works focus on understanding the transformer's behavior on various reasoning tasks through the lens of training dynamics. 
For example, previous studies have explored the emergence of induction heads~\citep{boix2023transformers}, factual recall~\citep{nichani2024transformers}, chain-of-thought reasoning~\citep{wen2024sparse, huang2025transformers}, chain of continuous thought~\citep{zhu2025reasoning,zhu2025emergence}, multi-hop reasoning~\citep{guo2025llms,wang2025learning}, etc. 
Our theoretical analysis mainly builds on \citet{huang2025generalization}, which providedsss a theoretical understanding of out-of-context reasoning. Contrary to the negative result in \citet{zhu2024towards}, we show that even a one-layer transformer is able to learn the reversal reasoning with the identity bridge.

\paragraph{Implicit bias. } The implicit bias of gradient descent has been extensively studied in classification tasks, which connects problems with logistic or exponentially-tailed loss to margin maximization \citep{soudry2018implicit,gunasekar2018implicit,gunasekar2018geo,lyu2019gradient, nacson2019convergence,nacson2019lexicographic,ji2019nonsep,vardi2022margin}. There are also many works exploring this connection in attention-based models \citep{tarzanagh2023transformers,ataee2023max,li2024mechanics,ildiz2024self,sheen2024implicit, vasudeva2024implicit}. Similar to \citet{huang2025generalization}, our work characterizes the solution to SVM programs to understand how the identity bridge helps break the reversal curse in transformers.

%% file: contents/conclusion.tex
\section{Conclusions}

In this paper, we propose using the identity bridge regularized data recipe to break the reversal curse, which, to the best of our knowledge, is the first method that can work for autoregressive LLMs without modifying the training paradigm or reversing the training data. We prove in theory that this data recipe can help break the reversal curse even in a one-layer transformer, and relate it to the OCR phenomenon, which provides insight for understanding the identity bridge regularization. In experiments, a 1B pretrained language model finetuned with the proposed data recipe achieves a 50\% success rate on reversal tasks, fully validating the effectiveness of the proposed method. The proposed identity bridge regularization method also admits limitations that the model can learn a shortcut, which is hard to eliminated, resulting in a gap between the 50\% pass rate and the ideal 100\% pass rate. For future directions, it would be helpful to understand the gap between one-token entity and multi-token entity, or symbolic data and textual token data, for real LLMs to solve the reversal tasks.

\FloatBarrier

%% file: contents/ack.tex
\section*{Acknowledgements}

This work was supported by the U.S. Army Research Laboratory and the U.S. Army Research Office under Grant W911NF2010219, Office of Naval Research, and NSF. This work used Jetstream2 at Indiana University through allocation CIS251312 from the Advanced Cyberinfrastructure Coordination Ecosystem: Services \& Support (ACCESS) program, which is supported by National Science Foundation grants \#2138259, \#2138286, \#2138307, \#2137603, and \#2138296.

%% file: appendix/appendix.tex
\input{appendix/appendix_proofs}

\input{appendix/appendix_experiments}

%% file: appendix/appendix_proofs.tex
\section{Omitted Proof}
\subsection{Proof of~\cref{thm:reversal_happen}}
\label{sec:proof_reversal_happen}

Without loss of generality, we assume the embedding of $a_i\in \cA,b_i\in\cB,r_{+}$ to be $\bz_{a_i} = \be_{i},\bz_{b_i} = \be_{N+i},\bz_{r_{+}} = \be_{2N+1}$, respectively. Then, we can obtain the following lemma characterizing the symmetry of the solution $\bWOV$.
\begin{lemma}\label{lemma:symmetry_solution_no_idn}
    Suppose $\bWOV$ is the single solution to problem~\eqref{eq:SVM} with $\cD = \cD_{r_{+}}$. Then,  $\bWOV$ admits the following form
    \begin{equation}\label{eq:symmetry_solution_no_idn}
        \bWOV =
\left[
\begin{matrix}
    f_1 \bI_N + f_2 \bE_N & g_1 \bI_N + g_2 \bE_N & \beta \bone_N \\
    l_1 \bI_N + l_2 \bE_N & m_1 \bI_N + m_2 \bE_N & \alpha \bone_N
\end{matrix}
\right],
    \end{equation}
where $f_1,f_2,g_1,g_2,l_1,l_2,m_1,m_2,\beta,\alpha$ are constants.

The singular values of $\bWOV$ in~\eqref{eq:symmetry_solution_no_idn} are 
\begin{equation}
    \{ \sigma_1^{(1)},\sigma_1^{(2)},\underbrace{\sigma_2^{(1)},\cdots,\sigma_2^{(1)}}_{N-1},\underbrace{\sigma_2^{(2)},\cdots,\sigma_2^{(2)}}_{N-1} \},
\end{equation}
and they can be computed by
\begin{align}
    & \sigma_1^{(1)} = \sqrt{\frac{(C_{A1} + NC_{A2} + C_{D1} + NC_{D2}) + \sqrt{(C_{A1} + NC_{A2} - (C_{D1} + NC_{D2}))^2 + 4(C_{B1} + NC_{B2})^2}}{2}};\\
    & \sigma_1^{(2)} = \sqrt{\frac{(C_{A1} + NC_{A2} + C_{D1} + NC_{D2}) - \sqrt{(C_{A1} + NC_{A2} - (C_{D1} + NC_{D2}))^2 + 4(C_{B1} + NC_{B2})^2}}{2}}; \\
    & \sigma_2^{(1)} =  \sqrt{\frac{(C_{A1} + C_{D1}) + \sqrt{(C_{A1} - C_{D1})^2 + 4C_{B1}^2}}{2}}; \\
    &\sigma_2^{(2)} =  \sqrt{\frac{(C_{A1} + C_{D1}) - \sqrt{(C_{A1} - C_{D1})^2 + 4C_{B1}^2}}{2}},
\end{align}
where
\begin{align}
& C_{A 1}= f_1^2+ g_1^2 \\
& C_{A 2}= 2 f_1 f_2+ N f_2^2+ 2 g_1 g_2+ N g_2^2 + \beta^2 \\
& C_{D 1}= l_1^2+ m_1^2 \\
& C_{D 2}=2 l_1 l_2+N l_2^2+ 2 m_1 m_2+N m_2^2 + \alpha^2 \\
& C_{B 1}= f_1 l_1+ g_1 m_1 \\
& C_{B 2}=f_1 l_2+f_2 l_1+N f_2 l_2+g_1 m_2+g_2 m_1+ N g_2 m_2 + \beta \alpha .
\end{align}
\end{lemma}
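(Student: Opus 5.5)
The plan has two parts: derive the block structure in \eqref{eq:symmetry_solution_no_idn} from the uniqueness of the solution together with a permutation symmetry of problem~\eqref{eq:SVM}, and then compute the singular values by simultaneously diagonalizing the blocks of $\bWOV\bWOV^{\mathrm{T}}$.

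\textbf{Step 1 (restricting to the relevant rows).} With the embeddings $\bz_{a_i}=\be_i$, $\bz_{b_i}=\be_{N+i}$, $\bz_{r_{+}}=\be_{2N+1}$, every margin constraint in~\eqref{eq:SVM} has the form $h_{[a_i,r_{+}],s'}$. It depends only on the rows of $\bWOV$ indexed by candidate outputs $s'\in\cA\cup\cB$, and only on the columns $1,\dots,2N+1$. Deleting the other rows and columns (setting them to zero) never increases the nuclear norm. Uniqueness therefore forces them to vanish, so it suffices to study the $2N\times(2N+1)$ submatrix. This is the two-block-row, three-block-column matrix displayed in \eqref{eq:symmetry_solution_no_idn}.

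\textbf{Step 2 (symmetry from uniqueness).} For any permutation $\pi$ of $[N]$, let $P_\pi$ be the $2N\times 2N$ permutation matrix $\mathrm{diag}(\Pi,\Pi)$, which relabels $a_i\mapsto a_{\pi(i)}$ and $b_i\mapsto b_{\pi(i)}$ simultaneously. Let $Q_\pi=\mathrm{diag}(\Pi,\Pi,1)$ act on columns and fix the $r_{+}$ coordinate. The map $W\mapsto P_\pi W Q_\pi^{\mathrm{T}}$ has two properties:
\begin{itemize}
\item it preserves the nuclear norm, since both factors are orthogonal;
\item it permutes the constraint set of~\eqref{eq:SVM} onto itself, because $\cD_{r_{+}}$ is invariant under the joint relabeling.
\end{itemize}
Hence $P_\pi\bWOV Q_\pi^{\mathrm{T}}$ is also optimal, and uniqueness gives $P_\pi\bWOV Q_\pi^{\mathrm{T}}=\bWOV$ for all $\pi$. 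Blockwise, each $N\times N$ block $M$ satisfies $\Pi M\Pi^{\mathrm{T}}=M$ for every permutation matrix $\Pi$. Such a matrix is constant on the diagonal and constant off the diagonal, i.e.\ $M=c_1\bI_N+c_2\bE_N$, where $\bE_N$ is the all-ones matrix. Each $N\times 1$ column block $v$ satisfies $\Pi v=v$, so $v$ is a multiple of $\bone_N$. This yields \eqref{eq:symmetry_solution_no_idn}.

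\textbf{Step 3 (singular values).} The singular values are the square roots of the eigenvalues of $\bWOV\bWOV^{\mathrm{T}}=\begin{bmatrix}A&B\\B^{\mathrm{T}}&D\end{bmatrix}$. Using $\bE_N^2=N\bE_N$ and $\bone_N\bone_N^{\mathrm{T}}=\bE_N$, I expand each block, for instance
\[
A=(f_1\bI+f_2\bE)^2+(g_1\bI+g_2\bE)^2+\beta^2\bE=C_{A1}\bI+C_{A2}\bE .
\]
In the same way $D=C_{D1}\bI+C_{D2}\bE$ and $B=C_{B1}\bI+C_{B2}\bE$, which is symmetric, so $B^{\mathrm{T}}=B$. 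All blocks lie in the commutative algebra spanned by $\bI$ and $\bE$. Decompose $\R^N=\mathrm{span}(\bone_N)\oplus\bone_N^{\perp}$; here $\bE$ acts as $N$ on the first summand and as $0$ on the second. The eigenproblem then splits into copies of $2\times2$ symmetric matrices:
\begin{itemize}
\item on $\mathrm{span}(\bone_N)$, the matrix $\begin{bmatrix}C_{A1}+NC_{A2}&C_{B1}+NC_{B2}\\C_{B1}+NC_{B2}&C_{D1}+NC_{D2}\end{bmatrix}$, appearing once;
\item on $\bone_N^{\perp}$, the matrix $\begin{bmatrix}C_{A1}&C_{B1}\\C_{B1}&C_{D1}\end{bmatrix}$, appearing with multiplicity $N-1$.
\end{itemize}
The standard formula for the eigenvalues of a $2\times2$ symmetric matrix, followed by a square root, gives exactly $\sigma_1^{(1,2)}$ and $\sigma_2^{(1,2)}$.

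\textbf{Main obstacle.} The step needing the most care is Step 2: one must check that the constraint set of~\eqref{eq:SVM}, including the quantifier over all incorrect labels $s'$, is genuinely invariant under the joint relabeling. This relies on applying the same $\pi$ to both the $a$- and $b$-indices, so that the correct label $b_i$ of $[a_i,r_{+}]$ is carried to the correct label $b_{\pi(i)}$ of $[a_{\pi(i)},r_{+}]$. Everything in Step 3 is routine algebra.
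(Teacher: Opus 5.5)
Your proposal is correct and follows essentially the same route as the argument the paper defers to (Lemmas 3, 4, and 5 of \citet{huang2025generalization}): you zero out the rows and columns the constraints never touch, and you use invariance of \eqref{eq:SVM} under the joint relabeling $a_i\mapsto a_{\pi(i)}$, $b_i\mapsto b_{\pi(i)}$ together with uniqueness to obtain the $c_1\bI_N+c_2\bE_N$ and $c\,\bone_N$ blocks. You then split $\bWOV\bWOV^{\mathrm{T}}$ over $\mathrm{span}(\bone_N)$ and $\bone_N^{\perp}$ into $2\times 2$ eigenproblems. As a side remark, your Step~1 argument also applies to the $b$-columns $N+1,\dots,2N$, which no training input $[a_i,r_{+}]$ touches, so uniqueness forces $g_1=g_2=m_1=m_2=0$ here; this is harmless for the lemma and makes the later claim $g_1=0$ in \cref{thm:reversal_happen} immediate.
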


The proof of this lemma is almost the same as that of Lemmas 3, 4, and 5 in~\citet{huang2025generalization}.

We can then give the proof of~\cref{thm:reversal_happen}.
\begin{proof}
    By~\eqref{eq:symmetry_solution_no_idn}, $ \forall a\in \cA \backslash \{ r_{-}(b) \}, h_{[b,r_{-}],a}(\bWOV) = g_1$. Thus, it suffices to show that $g_1 = 0$ for the optimal solution.
    
    By~\cref{lemma:symmetry_solution_no_idn}, the nuclear norm of $\| \bWOV \|_{\star}$ is
    \begin{equation}
        \| \bWOV \|_{\star} = \sigma_1^{(1)} + \sigma_1^{(2)} + (N-1) \sigma_2^{(1)} + (N-1) \sigma_2^{(2)}.
    \end{equation}

    $\sigma_1^{(1)} + \sigma_1^{(2)} $ is
    \begin{align}\allowdisplaybreaks
        & \sigma_1^{(1)} +  \sigma_1^{(2)}  \\
        &
        \begin{aligned}
            =& \sqrt{\frac{(C_{A1} + NC_{A2} + C_{D1} + NC_{D2}) + \sqrt{(C_{A1} + NC_{A2} - (C_{D1} + NC_{D2}))^2 + 4(C_{B1} + NC_{B2})^2}}{2}} \\
        &+ \sqrt{\frac{(C_{A1} + NC_{A2} + C_{D1} + NC_{D2}) - \sqrt{(C_{A1} + NC_{A2} - (C_{D1} + NC_{D2}))^2 + 4(C_{B1} + NC_{B2})^2}}{2}}
        \end{aligned}\\
        =&  \sqrt{C_{A1} + N C_{A2} + C_{D1} + N C_{D2} + 2 \sqrt{(C_{A1} + N C_{A2}) (C_{D1} + N C_{D2}) - (C_{B1} + N C_{B2})^2}}
    \end{align}

    $\sigma_2^{(1)} + \sigma_2^{(2)} $ is
    \begin{align}
        \sigma_2^{(1)} + \sigma_2^{(2)}  
        = & \sqrt{\frac{(C_{A1} + C_{D1}) + \sqrt{(C_{A1} - C_{D1})^2 + 4C_{B1}^2}}{2}} +  \sqrt{\frac{(C_{A1} + C_{D1}) - \sqrt{(C_{A1} - C_{D1})^2 + 4C_{B1}^2}}{2}}\\
        = & \sqrt{C_{A1} + C_{D1} + 2 |f_1 m_1 -g_1 l_1|}
    \end{align}

    Therefore, the optimization problem \eqref{eq:SVM} can be reformulated as
    \begin{equation}\label{eq:reversal_happen_proof_problem_1}
            \begin{aligned}
    & \begin{aligned}
                \min  \| \bWOV \|_{\star} =& \sqrt{C_{A1} + N C_{A2} + C_{D1} + N C_{D2} +
                 2 \sqrt{(C_{A1} + N C_{A2}) (C_{D1} + N C_{D2}) - (C_{B1} + N C_{B2})^2}} \\
        & + (N-1) \sqrt{C_{A1} + C_{D1} + 2 |f_1 m_1 -g_1 l_1|}
    \end{aligned}\\
    &  \quad \text{s.t. } \quad l_1 \geq 1, \quad l_1 + l_2 + \alpha \geq f_1 + f_2 + \beta + 1, \quad l_1 + l_2 + \alpha \geq  f_2 + \beta + 1. 
    \end{aligned}
    \end{equation}

    Notice that by the Cauchy-Schwarz inequality, we have
\begin{align}
    &2 \sqrt{(C_{A1} + N C_{A2}) (C_{D1} + N C_{D2}) - (C_{B1} + N C_{B2})^2}\\
    & \geq 0\\
    &\geq -\frac{1}{2} [(f_1+Nf_2+l_1+Nl_2)^2+(g_1+Ng_2+m_1+Nm_2)+N(\beta+\alpha)^2],
\end{align}
where the first equality condition is $|f_1+Nf_2| = |l_1+Nl_2|, |g_1+N g_2|=|m_1+N m_2|$, and $|\beta| = |\alpha|$. 

Therefore,
\begin{align}
    &\sqrt{C_{A1} + N C_{A2} + C_{D1} + N C_{D2} + 2 \sqrt{(C_{A1} + N C_{A2}) (C_{D1} + N C_{D2}) - (C_{B1} + N C_{B2})^2}}\\
    \geq &\sqrt{C_{A1} + N C_{A2} + C_{D1} + N C_{D2} -\frac{1}{2} [(f_1+Nf_2+l_1+Nl_2)^2+(g_1+Ng_2+m_1+Nm_2)+N(\beta+\alpha)^2]}\\
    =& \sqrt{ \frac{(f_1 + Nf_2 - l_1-N l_2)^2 + (m_1 + N m_2 - g_1-N g_2)^2 + N (\beta - \alpha)^2}{2} },\label{eq:reversal_happen_proof_eq1}
\end{align}
where equality holds if and only if $f_1+Nf_2+l_1+Nl_2 = g_1+N g_2+m_1+N m_2 = \beta+\alpha = 0$.

Inspired by~\eqref{eq:reversal_happen_proof_eq1}, we define the following new variables $ c_{f-l}, c_{m-g}$ and $c_{\beta - \alpha}$
\begin{equation}
    c_{f-l} = f_1 + Nf_2 - l_1-N l_2, \quad c_{m-g} = m_1 + N m_2 - g_1 - N g_2, \quad c_{\beta - \alpha} = \beta - \alpha.
\end{equation}

Then, the condition in~\eqref{eq:reversal_happen_proof_problem_1} can be written as
\begin{equation}
    l_1 \geq 1, \quad  (N-1) l_1 \geq N c_{\bmg} + c_{f-l}  +(N-1)f_1  + N, \quad (N-1) l_1 \geq N c_{\bmg} + c_{f-l}  - f_1  + N.
\end{equation}

Combining these observations, we can derive a lower bound problem for problem~\eqref{eq:reversal_happen_proof_problem_1}.
    \begin{equation}\label{eq:reversal_happen_proof_problem_lb}
            \begin{aligned}
                \min  \| \bWOV \|_{\star} &= \sqrt{\frac{c_{f-l}^2 + c_{m-g}^2+N c_{\bmg}^2}{2}}
        + (N-1) \sqrt{ (f_1-m_1)^2 + (g_1-l_1)^2 + 4t } \\
      \text{s.t. } \quad q_1: &  \ l_1 \geq 1,\\
    q_2: & \ (N-1) l_1 \geq (N-1) f_1 + c_{f-l} + N c_{\bmg}  + N,\\
        q_3: & \ (N-1) l_1 \geq - f_1 + c_{f-l} + N c_{\bmg}  + N,\\
    q_4: & \ t\geq f_1m_1,\\
   q_5: & \   t \geq g_1l_1.
    \end{aligned}
    \end{equation}

Notice that the lower bound problem~\eqref{eq:reversal_happen_proof_problem_lb} is optimized over variables $f_1,g_1,l_1,m_1,c_{f-l},c_{m-g}$, $c_{\bmg}$, and $t$. When these parameters are fixed, we can always find a set of other parameters such that the equality condition $f_1+Nf_2+l_1+Nl_2 = g_1+N g_2+m_1+N m_2 = \beta+\alpha = 0$ holds. Therefore, the lower bound problem~\eqref{eq:reversal_happen_proof_problem_lb} has the same optimal solution and optimal value as problem~\eqref{eq:reversal_happen_proof_problem_1}.

We notice that $l_1 = 1, f_1=m_1=g_1 =0, c_{f-l} = c_{\bmg} = -\frac{1}{N+1},   c_{m-g} =0, t = 0$ is a feasible solution with objective value $N-1 + \sqrt{\frac{1}{2(N+1)}}$. It serves as a baseline solution to help us eliminate suboptimal solutions (actually, it is the optimal solution).

Let us denote $M_1 =\frac{c_{f-l}^2 + c_{m-g}^2+N c_{\bmg}^2}{2} $  and $M_2 = (f_1-m_1)^2 + (g_1-l_1)^2 + 4t $ for notation convenience. Then, the KKT condition for problem~\eqref{eq:reversal_happen_proof_problem_lb} is
\begin{align}
    \gamma_i \geq & 0, i =1,2,3,4,5\\
    l_1 :& (N-1) \frac{l_1 - g_1}{\sqrt{M_2}} - \gamma_1 - (N-1) \gamma_2 - (N-1) \gamma_3 + g_1 \gamma_5 = 0\label{eq:reversal_happen_proof_kkt_eq1}\\
    f_1 :& (N-1) \frac{f_1 - m_1}{\sqrt{M_2}} + (N-1) \gamma_2 -\gamma_3 + m_1\gamma_4 = 0\label{eq:reversal_happen_proof_kkt_eq2}\\
    g_1 :& (N-1) \frac{g_1 - l_1}{\sqrt{M_2}} + l_1\gamma_5=0\label{eq:reversal_happen_proof_kkt_eq3}\\
    m_1: & (N-1) \frac{m_1 - f_1}{\sqrt{M_2}} + f_1\gamma_4=0\label{eq:reversal_happen_proof_kkt_eq4}\\
    c_{f-l} : & \frac{c_{f-l}}{2\sqrt{M_1}} + \gamma_2 + \gamma_3  = 0\label{eq:reversal_happen_proof_kkt_eq5}\\
    c_{\bmg} : &  N \frac{c_{\bmg}}{2\sqrt{M_1}} + N \gamma_2 + N\gamma_3 = 0\label{eq:reversal_happen_proof_kkt_eq6}\\
    c_{m-g}: & \frac{c_{m-g}}{2\sqrt{M_1}} =0\label{eq:reversal_happen_proof_kkt_eq7}\\
    t: & (N-1) \frac{2}{\sqrt{M_2}} -\gamma_4 - \gamma_5\label{eq:reversal_happen_proof_kkt_eq8} =0,
\end{align}
where $\gamma_i,i\in [5]$ are the dual variables corresponding to the constraints $q_i,i\in [5]$.

By~\cref{eq:reversal_happen_proof_kkt_eq7}, $c_{m-g} = 0$. By~\cref{eq:reversal_happen_proof_kkt_eq5} and~\cref{eq:reversal_happen_proof_kkt_eq6}, $c_{\bmg} = c_{f-l}$. We then prove that $f_1 = 0$ by case analysis.


\textbf{Case 1: $f_1 > 0$}

In this case, constraint $q_2$ is strictly stronger than $q_3$, so $q_3$ can not be tight, which indicates that $\gamma_3=0$ by complementary slackness.

By computing \eqref{eq:reversal_happen_proof_kkt_eq2} $-$ \eqref{eq:reversal_happen_proof_kkt_eq4}, we have
\begin{align}\label{eq:reversal_happen_proof_eq2}
   0 = \eqref{eq:reversal_happen_proof_kkt_eq2} -\eqref{eq:reversal_happen_proof_kkt_eq4} = (f_1-m_1)(\frac{2(N-1)}{\sqrt{M_2}} - \gamma_4) + (N-1)\gamma_2
\end{align}

By~\cref{eq:reversal_happen_proof_kkt_eq4} and~\cref{eq:reversal_happen_proof_kkt_eq8}, 
\begin{align}
    f_1-m_1 =  f_1\gamma_4 \frac{\sqrt{M_2}}{N-1} \geq 0\\
    \frac{2(N-1)}{\sqrt{M_2}} - \gamma_4 = \gamma_5 \geq 0
\end{align}

Therefore, in~\cref{eq:reversal_happen_proof_eq2}, both $(f_1-m_1)(\frac{2(N-1)}{\sqrt{M_2}} - \gamma_4) $ and $(N-1)\gamma_2$ are non-negative, which means at least one of $f_1-m_1$ and $\frac{2(N-1)}{\sqrt{M_2}} - \gamma_4$ is 0.

If $f_1-m_1 = 0$, then by~\eqref{eq:reversal_happen_proof_kkt_eq4}, $\gamma_4 =0$, so by~\eqref{eq:reversal_happen_proof_kkt_eq8}, $\gamma_5 = \frac{2(N-1)}{\sqrt{M_2}}$. Plugging $\gamma_5= \frac{2(N-1)}{\sqrt{M_2}}$ into~\eqref{eq:reversal_happen_proof_kkt_eq3}, we have $g_1 = -l_1 \leq -1$. Then, the objective value $ \| \bWOV \|_{\star}\geq
         (N-1) \sqrt{ (f_1-m_1)^2 + (g_1-l_1)^2 + 4t } \geq  (N-1) \sqrt{  (g_1-l_1)^2 } \geq 2(N-1) > N-1 +\sqrt{\frac{1}{2(N+1)}} $. Thus, it can not be the optimal solution.

If $\frac{2(N-1)}{\sqrt{M_2}} - \gamma_4 =0$, then by~\eqref{eq:reversal_happen_proof_kkt_eq8}, $\gamma_5 = 0$. According to~\eqref{eq:reversal_happen_proof_kkt_eq3}, $g_1 = l_1 \geq 1$. Then, the objective value $ \| \bWOV \|_{\star}\geq
         (N-1) \sqrt{ (f_1-m_1)^2 + (g_1-l_1)^2 + 4t } \geq  (N-1) \sqrt{  (g_1+l_1)^2 } \geq 2(N-1) > N-1 +\sqrt{\frac{1}{2(N+1)}} $. Thus, it also can not be the optimal solution.

Therefore, there does not exist an optimal solution with $f_1 > 0$.

\textbf{Case 2: $f_1 < 0$}

Similarly, by complementary slackness, $\gamma_2 = 0$ and we have
\begin{equation}
    0 = \eqref{eq:reversal_happen_proof_kkt_eq2}-~\eqref{eq:reversal_happen_proof_kkt_eq4} = (f_1-m_1)(\frac{2(N-1)}{\sqrt{M_2}} - \gamma_4) - \gamma_3
\end{equation}

Similarly, by \cref{eq:reversal_happen_proof_kkt_eq4} and \cref{eq:reversal_happen_proof_kkt_eq8}, $f_1-m_1 \leq 0$, and $(\frac{2(N-1)}{\sqrt{M_2}} - \gamma_4) \geq 0$,  so we have $(f_1-m_1)(\frac{2(N-1)}{\sqrt{M_2}} - \gamma_4) =0$. By the same argument as in Case 1, there does not exist an optimal solution with $f_1 < 0$.

Therefore, the optimal solution must have $f_1=0$, so $f_1 m_1 =0$.

We next show that the optimal $g_1 =0$ by using the tightness of constraints $q_4$ and $q_5$.

If $g_1 > 0$, then $g_1 l_1 > 0 = f_1m_1$, so constraint $q_4$ is not tight. Therefore, $\gamma_4 =0$ and $\gamma_5 = \frac{2(N-1)}{\sqrt{M_2}}$. This indicates $g_1 = - l_1 \leq -1$ by~\eqref{eq:reversal_happen_proof_kkt_eq3}, which contradict with $g_1 > 0$.

If $g_1 < 0$, then constraint $q_5$ is not tight, so $\gamma_5 =0$, so $g_1 = l_1 \geq 1$ by~\eqref{eq:reversal_happen_proof_kkt_eq3}, which contradict with $g_1 < 0$.

Therefore, the optimal $g_1$ is 0. So $h_{[b,r_{-}],a}(\bWOV^{\textup{+}}) = 0, \forall a\in \cA \backslash \{ r_{-}(b) \}$.
\end{proof}

\subsection{Proof of~\cref{thm:idn_bridge}}
\label{sec:proof_idn_bridge}

\begin{proof}
    We first note that~\cref{lemma:symmetry_solution_no_idn} also hold with $\cD = \cD_{r_{+}} \cup \cD_\text{idn} $, so by~\eqref{eq:symmetry_solution_no_idn}, $ \forall a\in \cA \backslash \{ r_{-}(b) \}$, it holds that $ h_{[b,r_{-}],a}(\bWOV) = g_1$ and $ \forall a\in \cB$, we have $h_{[b,r_{-}],a}(\bWOV) = \min \{g_1 + g_2 - \beta - (m_1 + m_2 -\alpha), g_1 + g_2 - \beta - ( m_2 -\alpha) \}$. Since the identity bridge data requires $m_1\geq 1$ (as shown in~\eqref{eq:idn_bridge_proof_problem_1}),
    %
    it suffices to prove the following two inequalities
    \begin{align}
        g_1 >0, \quad  g_1 + g_2 - \beta - (m_1 + m_2 -\alpha)>0.
    \end{align}

    According to~\cref{lemma:symmetry_solution_no_idn}, we can rewrite the problem~\cref{eq:SVM} as
    \begin{equation}\label{eq:idn_bridge_proof_problem_1}
            \begin{aligned}
    & \begin{aligned}
                & \min  \| \bWOV \|_{\star} \\  =& \sqrt{C_{A1} + N C_{A2} + C_{D1} + N C_{D2} +
                 2 \sqrt{(C_{A1} + N C_{A2}) (C_{D1} + N C_{D2}) - (C_{B1} + N C_{B2})^2}} \\
        & + (N-1) \sqrt{C_{A1} + C_{D1} + 2 |f_1 m_1 -g_1 l_1|},
    \end{aligned}\\
    &  \quad \text{s.t. } \ \  f_1\geq 1, l_1 \geq 1, m_1 \geq 1,\\
    & \quad \quad \quad l_1 + l_2 + \alpha \geq f_1 + f_2 + \beta + 1,\\
    & \quad \quad \quad f_1 + f_2 \geq l_1 + l_2 +1,\\
    & \quad \quad \quad m_1 + m_2 \geq g_1 + g_2 +1,\\
    &\quad \quad \quad  m_1+m_2 \geq g_2 +1.
    \end{aligned}
    \end{equation}

    Similar to the proof of~\cref{thm:reversal_happen}, we define $ c_{f-l} = f_1 + Nf_2 - l_1-N l_2, c_{m-g} = m_1 + N m_2 - g_1 - N g_2  , c_{\beta - \alpha} = \beta - \alpha$ and have the following lower bound problem
        \begin{equation}\label{eq:idn_bridge_proof_problem_lb}
            \begin{aligned}
                \min  \| \bWOV \|_{\star} &= \sqrt{\frac{c_{f-l}^2 + c_{m-g}^2+N c_{\bmg}^2}{2}}
        + (N-1) \sqrt{ (f_1-m_1)^2 + (g_1-l_1)^2 + 4t } \\
      \text{s.t. } \quad q_1:&   f_1 \geq 1,\\
      q_2:&   l_1 \geq 1,\\
      q_3:&   m_1 \geq 1,\\
    q_4:& (N-1) l_1 \geq (N-1) f_1 + c_{f-l} + N c_{\bmg}  + N,\\
    q_5:& (N-1) f_1 + c_{f-l}  \geq (N-1)l_1 + N,\\
    q_6:& (N-1) m_1 + c_{m-g} \geq (N-1) g_1 + N,\\
    q_7: & (N-1) m_1 + c_{m-g} \geq - g_1 + N,\\
    q_8:&  t\geq f_1m_1,\\
   q_9:&   t \geq g_1l_1.
    \end{aligned}
    \end{equation}

The lower bound problem~\eqref{eq:idn_bridge_proof_problem_lb} is optimized over variables $f_1,g_1,l_1,m_1,c_{f-l},c_{m-g},c_{\bmg},t$. When these parameters are fixed, we can always find a set of other parameters such that the equality condition $f_1+Nf_2+l_1+Nl_2 = g_1+N g_2+m_1+N m_2 = \beta+\alpha = 0$ holds. Therefore, the lower bound problem~\eqref{eq:idn_bridge_proof_problem_lb} has the same optimal solution and optimal value as problem~\eqref{eq:idn_bridge_proof_problem_1}.

Notice that
\begin{equation}\label{eq:idn_bridge_proof_feasible_sol}
    f_1=m_1=l_1=g_1 = 1,c_{f-l}= c_{m-g} = N,c_{\bmg}=-2, t = 1
\end{equation}
is a feasible solution with objective value
\begin{equation}
    \sqrt{\frac{2N^2 + 4N}{2}} + 2(N-1) \leq (N+1) + 2(N-1) = 3N-1,
\end{equation}
which means any optimal solution must have an objective value no larger than $3N-1$.

Let $M_1 =\frac{c_{f-l}^2 + c_{m-g}^2+N c_{\bmg}^2}{2} $  and $M_2 = (f_1-m_1)^2 + (g_1-l_1)^2 + 4t $. The KKT condition of problem~\eqref{eq:idn_bridge_proof_problem_lb} is
\begin{align}
    \gamma_i \geq & 0, i \in [9]\\
    l_1 :& (N-1) \frac{l_1 - g_1}{\sqrt{M_2}} - \gamma_2 - (N-1) \gamma_4 +(N-1)\gamma_5  + g_1 \gamma_9 = 0\label{eq:idn_bridge_proof_kkt_eq1}\\
    f_1 :& (N-1) \frac{f_1 - m_1}{\sqrt{M_2}} - \gamma_1 + (N-1) \gamma_4 -(N-1)\gamma_5 + m_1\gamma_8 = 0\label{eq:idn_bridge_proof_kkt_eq2}\\
    g_1 :& (N-1) \frac{g_1 - l_1}{\sqrt{M_2}} + (N-1) \gamma_6 - \gamma_7 + l_1\gamma_9=0\label{eq:idn_bridge_proof_kkt_eq3}\\
    m_1: & (N-1) \frac{m_1 - f_1}{\sqrt{M_2}} - \gamma_3 -(N-1)\gamma_6 - (N-1)\gamma_7  + f_1\gamma_8=0\label{eq:idn_bridge_proof_kkt_eq4}\\
    c_{f-l} : & \frac{c_{f-l}}{2\sqrt{M_1}} + \gamma_4 - \gamma_5  = 0\label{eq:idn_bridge_proof_kkt_eq5}\\
    c_{m-g}: & \frac{c_{m-g}}{2\sqrt{M_1}} - \gamma_6 -\gamma_7 =0\label{eq:idn_bridge_proof_kkt_eq6}\\
    c_{\bmg} : &  N \frac{c_{\bmg}}{2\sqrt{M_1}} + N \gamma_4 = 0\label{eq:idn_bridge_proof_kkt_eq7}\\
    t: & (N-1) \frac{2}{\sqrt{M_2}} -\gamma_8 - \gamma_9\label{eq:idn_bridge_proof_kkt_eq8} =0,
\end{align}
where $\gamma_i,i\in [9]$ are the dual variables corresponding to the constraints $q_i,i\in [9]$.

First, from constraints $q_4$ and $q_5$, we have
\begin{align}
    (N-1)l_1 \geq (N-1) f_1 + c_{f-l} + N c_{\bmg}  + N \geq (N-1)l_1 + N + N c_{\bmg} + N 
    \Rightarrow c_{\bmg}\leq -2,
\end{align}
so by~\eqref{eq:idn_bridge_proof_kkt_eq7}, $\gamma_4 > 0$, which indicates constraint $q_4$ is tight.

We then prove the two inequalities.

\textbf{Part 1: $g_1 > 0$}

We prove by contradiction. Suppose $g_1 \leq 0$. Then, constraint $q_9$ is not tight, so $\gamma_9 = 0$. By~\eqref{eq:idn_bridge_proof_kkt_eq3}
\begin{equation}\label{eq:idn_bridge_proof_eq2}
    \gamma_6 = \frac{1}{N-1} \gamma_7 +   \frac{l_1 - g_1}{\sqrt{M_2}} > 0,
\end{equation}
which means constraint $q_6$ is tight.

Plugging~\eqref{eq:idn_bridge_proof_eq2} into~\eqref{eq:idn_bridge_proof_kkt_eq6}, we have
\begin{equation}
    \sqrt{M_2} = \frac{l_1-g_1}{\frac{c_{m-g}}{2\sqrt{M_1}}-\frac{N}{N-1}\gamma_7} \geq \frac{l_1-g_1}{\frac{c_{m-g}}{2\sqrt{M_1}}} \geq \frac{2\sqrt{M_1}}{c_{m-g}}. 
\end{equation}

By constraint $q_6$,
\begin{equation}
    c_{m-g} = (N-1)g_1 + N - (N-1)m_1 \leq 1.
\end{equation}

Therefore,
\begin{equation}
    \sqrt{M_2}  \geq  \frac{2\sqrt{M_1}}{c_{m-g}}  \geq 2\sqrt{M_1}.
\end{equation}

So the objective is lower bounded by
\begin{equation*}
\begin{aligned}
    \| \bWOV \|_{\star} =& \sqrt{M_1} + (N-1)\sqrt{M_2} \geq \sqrt{M_1} (2N-1) \\ \geq & \sqrt{\frac{N c_{\bmg}^2}{2}}(2N-1) \geq \sqrt{2N} (2N-1) \geq 4N-2 > 3N-1,
\end{aligned}
\end{equation*}
and thus it is suboptimal to the known feasible solution~\eqref{eq:idn_bridge_proof_feasible_sol}.

Therefore, we must have $g_1 > 0$

\textbf{Part 2: $g_1 + g_2 - \beta - (m_1 + m_2 -\alpha)>0$}

This inequality is equivalent to
\begin{equation}
    (N-1)g_1 - (N-1)m_1 - c_{m-g} - N c_{\bmg} > 0
\end{equation}

To prove this inequality, we discuss two cases.

Case 1: constraint $q_6:(N-1) m_1 + c_{m-g} \geq (N-1) g_1 + N$ is tight. Then,
\begin{equation}
     (N-1) g_1 - (N-1) m_1 - c_{m-g}  = - N,
\end{equation}
so 
\begin{equation}
     (N-1)g_1 - (N-1)m_1 - c_{m-g} - N c_{\bmg} = -N - N c_{\bmg} \geq -N - N\cdot(-2) = N.
\end{equation}

Case 2: constraint $q_6$ is not tight. Then, $\gamma_6=0$. From part 1, we know $g_1 > 0$, so $q_7$ can not be tight (otherwise $q_6$ can not hold), which means $\gamma_7 = 0$. Combining these with~\eqref{eq:idn_bridge_proof_kkt_eq6}, we conclude that $c_{m-g} = 0$. Therefore,
\begin{equation}\label{eq:idn_bridge_proof_eq3}
    (N-1)g_1 - (N-1)m_1 - c_{m-g} - N c_{\bmg}  = (N-1)g_1 - (N-1)m_1  - N c_{\bmg} >  - (N-1)m_1 + 2 N,
\end{equation}
where last inequality uses $g_1 > 0$ and$ c_{\bmg}\leq 2$.

By~\eqref{eq:idn_bridge_proof_eq3}, it suffices to show $m_1\leq 2$. Supposing the opposite, the objective is lower bounded by
\begin{equation}
     \| \bWOV \|_{\star} = \sqrt{M_1} + (N-1)\sqrt{M_2} \geq 2 + (N-1) \sqrt{(m_1+f_1)^2} > 2 + (N-1) (2+1) = 3N-1,
\end{equation}
which is suboptimal to the known feasible solution~\eqref{eq:idn_bridge_proof_feasible_sol}.

Therefore, in both cases, we have $(N-1)g_1 - (N-1)m_1 - c_{m-g} - N c_{\bmg} > 0$.
\end{proof}

%% file: appendix/appendix_experiments.tex
\section{Experiment Details}
\label{app:experiment_detail}
We provide implementation details on experiments in \Cref{sec:exp_one_layer_tf} and \Cref{sec:exp_real_llm}. In the one-layer transformer experiments, we use $N=10$ reversal pairs, and use uniform initialization for $\bW_O$ and $\bW_V$ over $[-0.1,0.1]$. In training, we use full-batch gradient descent with a learning rate of $0.001$ to optimize the model.

For real LLM experiments, we use the cross-entropy loss and optimize it with the AdamW optimizer~\citep{kingma2014adam}. In finetuning, we disable the dataset shuffling in training to make sure the data of the same instant are trained in the same batch, and the different random seeds run are conducted by shuffling the dataset before training. In experiments, we search the best learning rate among $\{ 5\cdot10^{-5}, 10^{-4}, 2\cdot 10^{-4}, 4\cdot 10^{-4}, 7\cdot 10^{-4}, 10^{-3} \}$ and the best weight decay among $\{0.25,0.30,0.35\}$. The batch size is set to $1/10$ of the total data size.

The training loss of the FWD and OCR-6 methods is presented~\cref{fig:train_loss_fwd_ocr}. The training loss of the IDN and OCR-$k$ methods is presented~\cref{fig:train_loss_idn_ocr}. The training loss of the IDN and OCR-$k$ methods is presented~\cref{fig:train_loss_name_len}.

\begin{figure}[t]
  \begin{center}
    \centerline{\includegraphics[width=  \columnwidth]{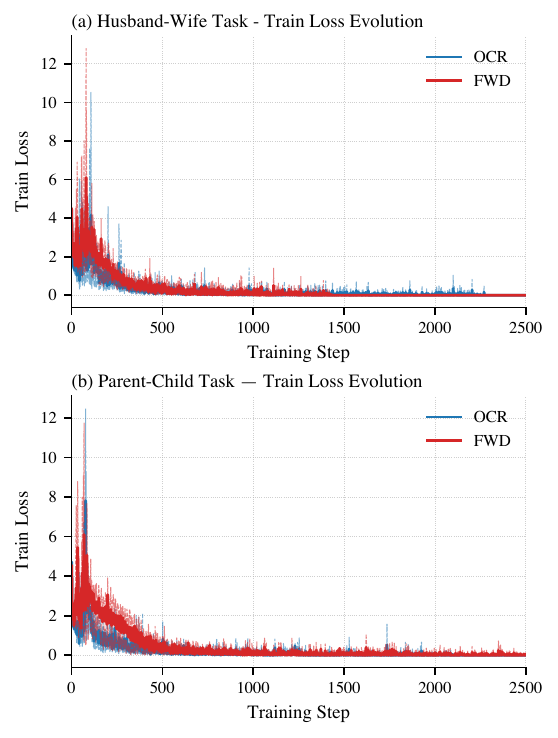}}
    \caption{
      Training loss comparison of FWD and OCR.
    }
    \label{fig:train_loss_fwd_ocr}
  \end{center}
  \vspace{-20pt}
\end{figure}

\begin{figure}[b]
  \begin{center}
    \centerline{\includegraphics[width=  \columnwidth]{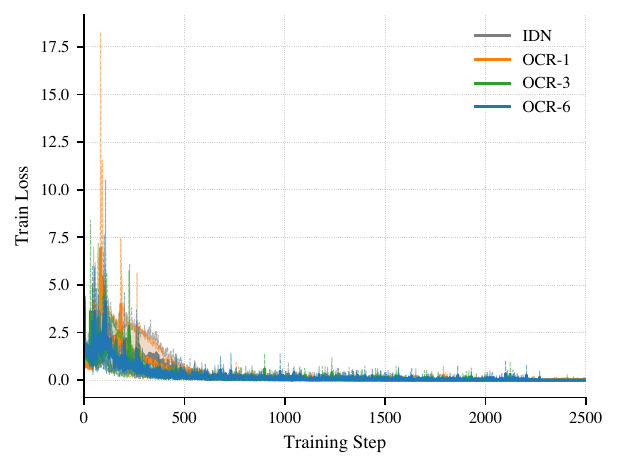}}
    \caption{
      Training loss comparison of IDN and OCR-$k$ on Husband-Wife task.
    }
    \label{fig:train_loss_idn_ocr}
  \end{center}
  \vspace{-20pt}
\end{figure}

\begin{figure}[bph!]
  \begin{center}
    \centerline{\includegraphics[width=  \columnwidth]{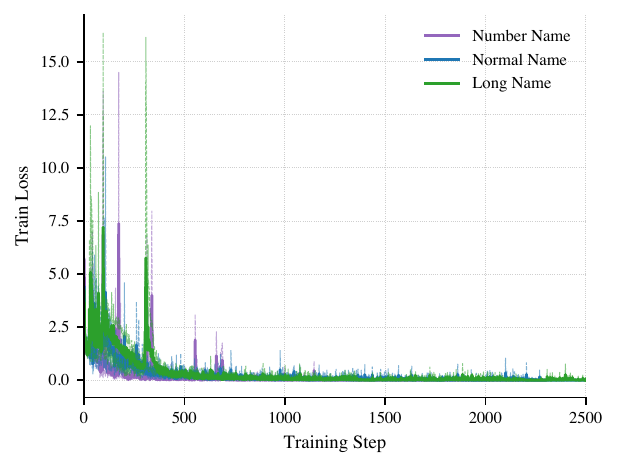}}
    \caption{
      Training loss comparison of different name length on Husband-Wife task.
    }
    \label{fig:train_loss_name_len}
  \end{center}
\end{figure}